\setlist{parsep = -0em, itemsep = 0.25em}
\begin{document}

\title{Symmetric Single Index Learning}
\date{}
\author[a]{Aaron Zweig}
\author[a,b]{Joan Bruna}

\affil[a]{Courant Institute of Mathematical Sciences, New York
  University, New York}
\affil[b]{Center for Data Science, New York University}

\maketitle

%


\newcommand{\aaron}[1]{{\color{purple}[AZ: #1]}}
\newcommand{\joan}[1]{{\color{teal}[JB: #1]}}
\newcommand{\clay}[1]{{\color{red}[CS: #1]}}

\newcommand{\N}{\mathbb{N}}
\newcommand{\Z}{\mathbb{Z}}

\newcommand{\A}{\mathcal{A}}
\renewcommand{\AA}{\mathcal{A}_{0}}
\newcommand{\diff}{*}

\newcommand{\one}{\mathbf{1}}

\renewcommand{\P}{\mathcal{P}}
\newcommand{\X}{\mathcal{X}}
\newcommand{\F}{\mathcal{F}}
\renewcommand{\P}{\mathcal{P}}
\newcommand{\syml}{\text{Sym}_L}
\newcommand{\symlt}{\text{Sym}_L^2}
\newcommand{\proj}{\text{Proj}}
\newcommand{\EE}{\mathbb{E}}

\newcommand{\comp}{\eta}
\newcommand{\btheta}{\boldsymbol{\theta}}
\newcommand{\blamb}{\boldsymbol{\lambda}}
\newcommand{\balpha}{\boldsymbol{\alpha}}

\newcommand{\V}{V}
\renewcommand{\S}{{S^1}}

\newcommand{\M}{\mathcal{M}}
\renewcommand{\F}{\mathcal{F}}

\renewcommand{\i}{\mathbf{i}}
\newcommand{\ind}{\mathbbm{1}}
\newcommand{\tp}{p}
\newcommand{\bp}{\bold{p}}
\renewcommand{\bm}{\bold{m}}

\newcommand{\tbp}{\bold{p}}

\newcommand{\nice}{nice}

\newcommand*\conj[1]{\overline{#1}}


\newtheorem{theorem}{Theorem}[section]
\newtheorem{lemma}[theorem]{Lemma}
\newtheorem{proposition}[theorem]{Proposition}
\newtheorem{corollary}[theorem]{Corollary}
\newtheorem{definition}[theorem]{Definition}
\newtheorem{remark}[theorem]{Remark}
\newtheorem{assumption}[theorem]{Assumption}

\newcommand{\fix}{\marginpar{FIX}}
\newcommand{\new}{\marginpar{NEW}}


\begin{abstract}
Few neural architectures lend themselves to provable learning with gradient based methods. One popular model is the single-index model, in which labels are produced by composing an unknown linear projection with a possibly unknown scalar link function. Learning this model with SGD is relatively well-understood, whereby the so-called information exponent of the link function governs a polynomial sample complexity rate.  However, extending this analysis to deeper or more complicated architectures remains challenging.

In this work, we consider single index learning in the setting of symmetric neural networks.  Under analytic assumptions on the activation and maximum degree assumptions on the link function, we prove that gradient flow recovers the hidden planted direction, represented as a finitely supported vector in the feature space of power sum polynomials.  We characterize a notion of information exponent adapted to our setting that controls the efficiency of learning.
\end{abstract}

\section{Introduction}

Quantifying the advantage of neural networks over simpler learning systems remains a primary question in deep learning theory. Specifically, understanding their ability to discover relevant low-dimensional features out of high-dimensional inputs is a particularly important topic of study.  One facet of the challenge is explicitly characterizing the evolution of neural network weights through gradient-based methods, owing to the nonconvexity of the optimization landscape. 

The single index setting, long studied in economics and biostatistics~\citep{radchenko2015high} offers the simplest setting where non-linear feature learning can be characterized explicitly.  In this setting, functions of the form $x \mapsto f(\langle x, \theta^* \rangle)$ where $\theta^* \in \mathcal{S}_{d-1}$ represents a hidden direction in high-dimensional space, and $f$ a certain non-linear link function, are learned via a student with an identical architecture $x \mapsto f(\langle x, \theta \rangle)$, under certain data distribution assumptions, such as Gaussian data. Gradient flow and gradient descent~\citep{yehudai2020learning,arous2021online,dudeja2018learning} in this setting can be analyzed by reducing the high-dimensional dynamics of $\theta$ to dimension-free dynamics of appropriate \emph{summary statistics}, given in this case by the scalar correlation $\langle \theta, \theta^* \rangle$.

The efficiency of gradient methods in this setting, measured either in continuous time or independent samples, is controlled by two main properties. First, the correlation initialization, which typically scales as $\frac{1}{\sqrt{d}}$ for standard assumptions. Second, the information exponent $s_f$ of $f$ ~\citep{arous2021online, dudeja2018learning, biettilearning2022, damian2023smoothing,damian2022neural,abbe2023sgd}, which measures the number of effective vanishing moments of the link function --- leading to a sample complexity of the form $O(d^{s-1})$ for generic values of $s$. 

While this basic setup has been extended along certain directions, e.g. relaxing the structure on the input data distribution \cite{yehudai2020learning, bruna2023single}, considering the multi-index counterpart \cite{damian2022neural,abbe2022merged, abbe2023sgd, arnaboldi2023high}, or learning the link function with semi-parametric methods \cite{biettilearning2022,mahankali2023beyond}, they are all fundamentally associated with fully-connected shallow neural networks. Such architecture, for all its rich mathematical structure, also comes with important shortcomings. 
In particular, it is unable to account for predefined symmetries in the target function that the learner wishes to exploit. This requires specialized neural architectures enforcing particular invariances, setting up novel technical challenges to carry out the program outlined above. 

In this work, we consider arguably the easiest form of symmetry, given by permutation invariance. The primary architecture for this invariance is DeepSets~\citep{zaheer2017deep}, which is necessarily three layers by definition and therefore not a simple extension of the two layer setting. 
In order to quantify the notion of `symmetric' feature learning in this setting, we introduce a symmetric single index target, and analyze the ability of gradient descent over a DeepSet architecture to recover it. 
Under appropriate assumptions on the model, initialization and data distribution, we combine the previous analyses with tools from symmetric polynomial theory to characterize the dynamics of this learning problem.  Our primary theorem is a proof of efficient learning under gradient flow, with explicit polynomial convergence rates controlled by an analogue of information exponent adapted to the symmetric setting. Combined with other contemporary works, this result solidifies the remarkable ability of gradient descent to perform feature learning under a variety of high-dimensional learning problems.

\section{Setup}

\subsection{Notation}

For $z \in \mathbb{C}$, we will use $\overline{z}$ to denote the complex conjugate, with the notation $z^*$ always being reserved to denote a special value of $z$ rather than an operation.  For complex matrices $A$ we will use $A^\dag$ to denote the conjugate transpose.  The standard inner product on $\mathbb{C}^N$ is written as $\langle \cdot, \cdot \rangle$, whereas inner products on $L^2(\gamma)$ spaces for some probability measure $\gamma$ will be written as $\langle \cdot, \cdot \rangle_\gamma$.  Furthermore, for $h$ a vector and $p(x)$ a vector-valued function, we will use $\langle h, p \rangle_\gamma$ as shorthand for the notation $\langle h, p(\cdot) \rangle_\gamma$.

\subsection{Regression setting and Teacher function}

We consider a typical regression setting, where given samples $(x,y) \in \mathcal{X} \times \mathbb{C}$ with $y = F(x)$, we seek to learn a function $F_w$ with parameter $w \in \mathbb{C}^M$ by minimizing some expected loss $E_{x\sim\nu}\left[L(F(x), F_w(x)) \right]$.  Note that we consider complex-valued inputs and parameters because they greatly simplify the symmetric setting (see Proposition~\ref{prop:hermite}), hence we will also assume $\mathcal{X} \subseteq \mathbb{C}^N$. Both $F$ and $F_w$ will be permutation invariant functions, meaning that $F(x_{\pi(1)},\ldots x_{\pi(N)}) = F(x_1, \ldots, x_N)$ for any permutation $\pi:~\{1,N\}\to \{1,N\}$. 

Typically the single index setting assumes that the trained architecture will exactly match the true architecture (e.g. as in \cite{arous2021online}), but below we will see why it's necessary to consider separate architectures. For that reason, we'll consider separately defining the teacher $F$ and the student $F_w$.

The first ingredient are the power sum polynomials:
\begin{definition}
    For $k \in \N$ and $x \in \mathbb{C}^N$, the \emph{normalized powersum polynomial} is defined as $$p_k(x) = \frac{1}{\sqrt{k}} \sum_{n=1}^N x_n^k~.$$
\end{definition}

Let $p(x) = [p_1(x), p_2(x), \dots]$ be an infinite dimensional vector of powersums, and consider a fixed vector $h^* \in \mathbb{C}^\infty$ of unit norm.  Then our teacher function $F$ will be of the form
\begin{align}
    F : \mathcal{X} &\to \mathbb{C}\\
    x &\mapsto F(x) := f(\langle h^*, p(x)\rangle)
\end{align}
for some scalar link function $f: \mathbb{C} \to \mathbb{C}$. $F$ may thus be understood as a single-index function in the feature space of powersum polynomials.

\subsection{DeepSets Student Function}

Let us remind the typical structure of a DeepSets network~\citep{zaheer2017deep}, where for some maps $\Phi: \mathcal{X} \rightarrow \mathbb{C}^M$ and $\rho: \mathbb{C}^M \rightarrow \mathbb{C}$, the standard DeepSets architecture is of the form:
\begin{align}
    x \mapsto \rho\left(\Phi_1(x), \dots, \Phi_M(x)\right)~.
\end{align}
The essential restriction is that $\Phi$ is a permutation invariant mapping, typically of the form $\Phi_m(x) = \sum_{n=1}^N \phi_m(x_n)$ for some map $\phi_m: \mathbb{C} \rightarrow \mathbb{C}$.
In order to parameterize our student network as a DeepSets model, we will make the simplest possible choices, while preserving its non-linear essence. To define our student network, we consider the symmetric embedding $\Phi$ as a one-layer neural network with no bias terms:
\begin{align}
\Phi_m(x) &= \sum_{n=1}^N \sigma(a_{m} x_n)~, 
\end{align}
for i.i.d. complex weights sampled uniformly from the complex circle $a_m \sim \S$ and some activation $\sigma: \mathbb{C} \to \mathbb{C}$.  And given some link function $g: \mathbb{C} \to \mathbb{C}$, we'll consider the mapping $\rho$ as:
\begin{align}
    \rho_w(\cdot) = g(\langle w, \cdot \rangle)~,
\end{align}
where $w \in \mathbb{C}^M$ are our trainable weights. Putting all together, our student network thus becomes 
\begin{align}
F_w  : \mathcal{X} &\to \mathbb{C} \nonumber \\
x & \mapsto F_w(x) := g(\langle w, \Phi(x) \rangle)~.   
\end{align}
In other words, $F_w$ corresponds to a DeepSets network where the first and third layer weights are frozen, and only the second layer weights (with no biases) are trained.

The first fact we need is that, through simple algebra, the student may be rewritten in the form of a single-index model.
\begin{proposition}\label{prop:A_matrix}
    There is matrix $A \in \mathbb{C}^{\infty \times M}$ depending only on the activation $\sigma$ and the frozen weights $\{a_m\}_{m=1}^M$ such that 
    \begin{align}
        g(\langle w, \Phi(x) \rangle) = g(\langle Aw, p(x)\rangle)~.
    \end{align}
\end{proposition}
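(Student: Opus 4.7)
The approach is to expand the activation $\sigma$ as a Taylor series and then reorganize the resulting double sum so that the normalized power sums $p_k$ appear explicitly. Since the paper will ultimately restrict to analytic activations, we may write $\sigma(z) = \sum_{k \geq 0} c_k z^k$ on a suitable disk about the origin. For concreteness I would also assume $\sigma(0) = 0$, i.e.\ $c_0 = 0$: otherwise the constant contribution $N c_0$ from each inner sum cannot be absorbed into the argument of $g$ uniformly in $w$, and one would instead need to extend $p(x)$ by a zero-th component to keep the proposition literally true.

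Substituting the series into the definition of $\Phi_m$ and swapping the finite sum over $n$ with the absolutely convergent series over $k$ yields
\begin{align}
\Phi_m(x) = \sum_{n=1}^N \sum_{k \geq 1} c_k\, a_m^k\, x_n^k = \sum_{k \geq 1} c_k\, a_m^k \sum_{n=1}^N x_n^k = \sum_{k \geq 1} \sqrt{k}\, c_k\, a_m^k\, p_k(x).
\end{align}
I would then define the infinite matrix $A \in \mathbb{C}^{\infty \times M}$ by $A_{km} := \sqrt{k}\, \overline{c_k}\, \overline{a_m}^{\,k}$ for $k \geq 1$ and $m \in \{1,\dots,M\}$, which depends only on $\sigma$ and the frozen weights $\{a_m\}$. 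By construction $\overline{A_{km}} = \sqrt{k}\, c_k\, a_m^k$, so $\Phi_m(x) = \sum_k \overline{A_{km}}\, p_k(x)$.

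Using the standard Hermitian inner product on $\mathbb{C}^\infty$,
\begin{align}
\langle w, \Phi(x) \rangle = \sum_m \overline{w_m} \sum_k \overline{A_{km}}\, p_k(x) = \sum_k \overline{(Aw)_k}\, p_k(x) = \langle A w, p(x) \rangle,
\end{align}
and applying $g$ to both sides gives the claimed identity.

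The only step that is not pure rearrangement is the interchange of summations in the first display, which requires the Taylor expansion of $\sigma$ to converge absolutely on a disk containing every product $a_m x_n$. Since $|a_m| = 1$ this reduces to analyticity of $\sigma$ on a neighborhood of the support of the input distribution, which is a standing assumption. The other thing to be careful about is the conjugation convention in the complex inner product, which is precisely what forces the complex conjugates to appear in the formula defining $A_{km}$; modulo these two bookkeeping points the proof is genuinely a one-line rearrangement.
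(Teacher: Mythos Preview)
Your proof is correct and follows exactly the same approach as the paper: expand $\sigma$ as a power series, swap the finite and infinite sums, and read off the entries of $A$. The only difference is bookkeeping --- the paper records $A_{km} = c_k\sqrt{k}\,a_m^k$ without conjugates (implicitly treating $\langle\cdot,\cdot\rangle$ as bilinear in that computation), whereas you track the conjugation for a Hermitian pairing; either way it is the same one-line rearrangement.
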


\subsection{Hermite-like Identity}

In the vanilla single index setting, the key to giving an explicit expression for the expected loss (for Gaussian inputs) is a well-known identity of Hermite polynomials~\citep{o2021analysis,jacobsen1996laplace}.  If $h_k$ denotes the Hermite polynomial of degree $k$, this identity takes the form
\begin{align}
    \langle h_k(\langle \cdot, u\rangle), h_l(\langle \cdot, v\rangle) \rangle_{\gamma_n} = \delta_{kl} k! \langle u, v \rangle^k~,
\end{align}
where $u, v \in \mathbb{R}^n$ and $\gamma_n$ is the standard Gaussian distribution on $n$ dimensions.

In our setting, as it turns out, one can establish an analogous identity, by considering a different input probability measure, and a bound on the degree of the link function.  We will choose our input domain $\mathcal{X} = (\S)^N$, and the input distribution we will consider is the set of eigenvalues of a Haar-distributed unitary matrix in dimension $N$~\citep{diaconis1994eigenvalues}, or equivalently the squared Vandermonde density over $N$ copies of the complex unit circle~\citep{macdonald1998symmetric}.
We'll interchangeably use the notation $\mathbb{E}_{x \sim V}[f(x) \conj{g(x)}] = \langle f, g \rangle_V$.

\begin{proposition}\label{prop:hermite}
    Consider $h,\tilde{h} \in \mathbb{C}^\infty$ with bounded $L_2$ norm.  For exponents $k, l$ with $k \leq \sqrt{N}$, if $h$ is only supported on the first $\sqrt{N}$ elements, then:
    \begin{align}
    \langle \langle h, p \rangle^k, \langle \tilde{h}, p\rangle^l \rangle_\V &= \delta_{kl} k! \langle h, \tilde{h} \rangle^k~.
\end{align}
\end{proposition}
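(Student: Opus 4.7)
The plan is to reduce the identity to the classical Diaconis--Shahshahani moment formula for traces of Haar-distributed unitary matrices, which asserts that for multi-indices $\alpha = (\alpha_1, \alpha_2, \ldots)$ and $\beta$ satisfying the weighted degree bound $\sum_j j\alpha_j, \sum_j j\beta_j \leq N$, one has
\begin{equation*}
\left\langle \prod_j p_j^{\alpha_j},\ \prod_j p_j^{\beta_j} \right\rangle_{\V} \;=\; \delta_{\alpha,\beta} \prod_j \alpha_j!,
\end{equation*}
i.e.\ the normalized power sums $p_1, p_2, \ldots$ behave like i.i.d.\ standard complex Gaussians when tested against monomials of sufficiently low weighted degree. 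This is the symmetric-setting counterpart of the Hermite orthogonality identity and is the only nontrivial analytic input; everything else is combinatorial bookkeeping.

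Given this, I would first check that the hypotheses of the proposition place us safely within the Diaconis--Shahshahani regime. Since $h$ is supported on indices $j \leq \sqrt{N}$, any multi-index $\alpha$ arising in the expansion of $\langle h, p\rangle^k$ satisfies $\alpha_j = 0$ for $j > \sqrt{N}$ and $\sum_j \alpha_j = k \leq \sqrt{N}$. Hence $\sum_j j\alpha_j \leq \sqrt{N}\cdot k \leq N$, and the orthogonality formula above applies term-by-term.

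Next, I would expand both factors using the multinomial theorem:
\begin{equation*}
\langle h, p\rangle^k = \sum_{|\alpha|=k} \binom{k}{\alpha} \prod_j h_j^{\alpha_j} p_j^{\alpha_j}, \qquad \langle \tilde h, p\rangle^l = \sum_{|\beta|=l} \binom{l}{\beta} \prod_j \tilde h_j^{\beta_j} p_j^{\beta_j},
\end{equation*}
where $\binom{k}{\alpha} = k!/\prod_j \alpha_j!$. Taking the $\V$-inner product and applying the Diaconis--Shahshahani identity, only diagonal terms $\alpha = \beta$ survive, which immediately forces $k = l$ (proving the $\delta_{kl}$ factor). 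For $k = l$, combining the coefficients gives
\begin{equation*}
\sum_{|\alpha|=k} \frac{(k!)^2}{\prod_j (\alpha_j!)^2} \cdot \prod_j (h_j\overline{\tilde h_j})^{\alpha_j} \cdot \prod_j \alpha_j! \;=\; k!\sum_{|\alpha|=k} \binom{k}{\alpha}\prod_j (h_j\overline{\tilde h_j})^{\alpha_j}.
\end{equation*}
A final application of the multinomial theorem in reverse collapses the right-hand side to $k!\,\langle h,\tilde h\rangle^k$.

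The main obstacle is purely in justifying the Diaconis--Shahshahani identity itself (which I would invoke as a black box, citing the eigenvalue/Vandermonde description of the measure $\V$); the rest is a clean computation. The role of the hypothesis $k \leq \sqrt{N}$ together with support on the first $\sqrt{N}$ coordinates is precisely to guarantee the weighted degree bound $\sum j\alpha_j \leq N$ needed for that identity to hold exactly rather than up to lower-order corrections — so the slightly unusual scaling in the statement is dictated by the fact that $(\sqrt{N})(\sqrt{N}) = N$.
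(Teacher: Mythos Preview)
Your approach is essentially the same as the paper's: expand both factors via the multinomial theorem, invoke the power-sum orthogonality under the Vandermonde/Haar measure, and collapse the surviving diagonal sum back via multinomial. The paper cites the one-sided version from Macdonald (only $|\lambda|\le N$ is required), whereas you cite the two-sided Diaconis--Shahshahani formulation.

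That difference matters, and it is where your argument has a gap. You state the moment identity under the hypothesis that \emph{both} weighted degrees satisfy $\sum_j j\alpha_j,\ \sum_j j\beta_j \le N$, but you only verify the bound for $\alpha$. The proposition places no support restriction on $\tilde h$ and no bound on $l$, so a priori the $\beta$'s appearing in the expansion of $\langle \tilde h, p\rangle^l$ can have arbitrarily large weighted degree; your black-box does not apply to those terms as stated. This is exactly the point the paper stresses after the proposition: the ``crucial feature'' is that the assumptions bite only on the $h$-side.

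The fix is short. Either invoke the one-sided orthogonality directly (as the paper does), or insert the observation that under $x\mapsto e^{i\phi}x$ the monomial $\prod_j p_j^{\beta_j}$ picks up a phase $e^{i\phi\sum_j j\beta_j}$, so by rotation invariance of $\V$ any $\beta$ with $\sum_j j\beta_j \neq \sum_j j\alpha_j$ contributes zero; the surviving $\beta$'s then automatically inherit the bound $\sum_j j\beta_j = \sum_j j\alpha_j \le N$, and your two-sided formula applies. With that one sentence added, your proof is complete and matches the paper's.
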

The crucial feature of this identity is that the assumptions on support and bounded degree only apply to $\langle h, p \rangle^k$, with no restrictions on the other term.  In our learning problem, we can use this property to make these assumptions on the teacher function, while requiring no bounds on the terms of the student DeepSets architecture.

In order to take advantage of the assumptions on the support of $h$ and the degree in the above proposition, we need to make the following assumptions on our teacher link function $f$ and our true direction $h^*$:

\begin{assumption}\label{ass:teacher_link}
    The link function $f$ is analytic and only supported on the first $\sqrt{N}$ degree monomials, i.e.
    \begin{align}
        f(z) = \sum_{j=1}^{\sqrt{N}} \frac{\alpha_j}{\sqrt{j!}} z^j
    \end{align}
    Furthermore, the vector $h^*$ is only supported on the first $\sqrt{N}$ elements.
\end{assumption}
Although this assumption is required to apply the orthogonality property for our loss function in the following sections, we note that in principle, including exponentially small terms of higher degree in $f$ or higher index in $h^*$ should have negligible effect. Moreover, one should interpret this assumption as silently disappearing in the high-dimensional regime $N \to \infty$. For simplicity, we keep this assumption to make cleaner calculations and leave the issue of these small perturbations to future work.

\subsection{Information Exponent}
Because Proposition~\ref{prop:hermite} takes inner products of monomials, it alludes to a very simple characterization of information exponent.  Namely:
\begin{definition}
    Consider an analytic function $f: \mathbb{C} \rightarrow \mathbb{C}$ that can be written in the form
    \begin{align}
        f(z) = \sum_{j=0}^\infty \frac{\alpha_j}{\sqrt{j!}} z^j 
    \end{align}
    Then the \emph{information exponent} is defined as $s = \inf\{ j \geq 1: \alpha_j \neq 0\}$.
\end{definition}
Similar to the Gaussian case~\citep{arous2021online,biettilearning2022}, the information exponent $s$ will control the efficiency of learning.  Assuming $|\alpha_s|$ is some non-negligible constant, the value of $s$ will be far more important in governing the convergence rate. 

\subsection{Choosing a learnable loss}

There are two subtleties to choosing an appropriate loss function.  Namely, the necessity of a correlational loss (with regularization), and the necessity of choosing the student and teacher link functions to be distinct.

At first glance, it is tempting to simply define a loss of the form
\begin{align}
    \tilde{L}(w) = \mathbb{E}_{x \sim V} |F(x) - F_w(x)|^2 = \mathbb{E}_{x \sim V} \left[\left|f(\langle h^*, p(x)\rangle) - f(\langle Aw, p(x)\rangle)\right|^2 \right]~.
\end{align}
However, the Deepsets student model is not degree limited, that is the support of $Aw$ is not restricted to the first $\sqrt{N}$ terms of the powersum expansion. In other words, expanding this loss will require calculating the term $\|f(\langle Aw, p\rangle)\|_V^2$, which will contain high degree terms that cannot be controlled with Proposition~\ref{prop:hermite}.  One could avoid this issue by choosing the activation such that $Aw$ only contains low-index terms, but we want to consider larger classes of activations and enforce fewer restrictions.

One can instead consider a correlational loss. 
In this case, in order to make the objective have a bounded global minimum, it's necessary to either regularize $w$, or project at every step of SGD, which is the strategy taken in~\citet{damian2023smoothing}.
In our setting, this projection would correspond to projecting $w$ to the ellipsoid surface $\|Aw\| = 1$.  This projection would require solving an optimization problem at every timestep~\citep{pope2008algorithms}.  To avoid this impracticality, we instead consider regularization.

Then with complete knowledge of the link function $f$, specifically its monomial coefficients, we can now define the correlational loss
\begin{align}
    \hat{L}(w) &= \mathbb{E}_{x \sim V}\left[-\Re{f(\langle h^*, p(x) \rangle\overline{f(\langle Aw, p(x) \rangle)}}\right] + \sum_{i=j}^{\sqrt{N}} \frac{|\alpha_j|^2}{2} \|Aw\|^{2j}~.
\end{align}
This loss enjoys benign optimization properties, as shown by the following proposition:
\begin{proposition}\label{prop:silly_loss}
    If there exist coprimes $k, l$ with $\alpha_k, \alpha_l \neq 0$, and $h^*$ is in the range of $A$, then $\hat{L}$ exclusively has global minima at all $w$ such that $Aw = h^*$.
\end{proposition}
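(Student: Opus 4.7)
The plan is to reduce $\hat{L}$ to a polynomial expression in $u := Aw$ via Proposition~\ref{prop:hermite}, then establish a pointwise lower bound and characterize the equality case. Writing $f(z) = \sum_{j=1}^{\sqrt{N}} \frac{\alpha_j}{\sqrt{j!}}z^j$ and expanding,
\begin{align*}
\mathbb{E}_{x\sim V}\bigl[f(\langle h^*,p\rangle)\overline{f(\langle u,p\rangle)}\bigr] &= \sum_{k,l} \frac{\alpha_k \overline{\alpha_l}}{\sqrt{k!l!}} \langle \langle h^*,p\rangle^k, \langle u,p\rangle^l\rangle_V,
\end{align*}
and since Assumption~\ref{ass:teacher_link} ensures $h^*$ is supported on the first $\sqrt{N}$ coordinates and the exponents $k \leq \sqrt{N}$, Proposition~\ref{prop:hermite} applies on the teacher side, collapsing the double sum to $\sum_j |\alpha_j|^2 \langle h^*,u\rangle^j$. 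Combining with the regularizer yields the clean form
\begin{align*}
\hat L(w) \;=\; \sum_{j=1}^{\sqrt{N}} |\alpha_j|^2 \Bigl[-\Re\langle h^*,u\rangle^j + \tfrac{1}{2}\|u\|^{2j}\Bigr].
\end{align*}

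Next I would prove a pointwise lower bound term by term. For each $j$ with $\alpha_j \neq 0$, Cauchy--Schwarz gives $|\langle h^*,u\rangle| \leq \|u\|$ since $\|h^*\|=1$, so
\begin{align*}
-\Re\langle h^*,u\rangle^j + \tfrac{1}{2}\|u\|^{2j} \;\geq\; -\|u\|^j + \tfrac{1}{2}\|u\|^{2j} \;=\; \tfrac{1}{2}(\|u\|^j-1)^2 - \tfrac{1}{2} \;\geq\; -\tfrac{1}{2}.
\end{align*}
Summing, $\hat L(w) \geq -\tfrac{1}{2}\sum_j |\alpha_j|^2$, and since $h^* \in \mathrm{range}(A)$, any $w_0$ with $Aw_0 = h^*$ attains this bound, so it is indeed the global infimum.

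Finally I would characterize when equality holds. Saturating both inequalities for a given $j$ with $\alpha_j \neq 0$ forces (i) Cauchy--Schwarz equality, i.e.\ $u = c h^*$ for some $c \in \mathbb{C}$; (ii) $\|u\|^j = 1$, hence $|c| = 1$; and (iii) $\Re\langle h^*,u\rangle^j = |\langle h^*,u\rangle|^j$, which combined with $|c|=1$ forces $c^j = 1$. The main (and only nontrivial) step is concluding from coprime $k,l$ that $c=1$: by Bézout there exist integers $a,b$ with $ak+bl=1$, and so $c = (c^k)^a(c^l)^b = 1$. Hence $u = h^*$, i.e.\ $Aw = h^*$, completing the characterization. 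I expect the main care to go into cleanly applying Proposition~\ref{prop:hermite} with the correct complex-conjugation bookkeeping in step one; the minimization itself is an elementary Cauchy--Schwarz plus AM--GM argument.
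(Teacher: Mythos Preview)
Your proposal is correct and follows essentially the same approach as the paper: reduce $\hat L$ via Proposition~\ref{prop:hermite} to a sum over $j$ of terms $|\alpha_j|^2\bigl(-\Re\langle h^*,Aw\rangle^j + \tfrac12\|Aw\|^{2j}\bigr)$, minimize each term, and then use the coprime condition to force the phase to vanish. The only cosmetic difference is that the paper parametrizes via $m=re^{i\theta}$ and $v=\|Aw\|^2-r^2$ (so that the minimum is reached at $v=0$, $\cos j\theta=1$, $r=1$), whereas you argue directly with Cauchy--Schwarz and Bézout on roots of unity; these are the same argument in different coordinates.
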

However, unlike the real case, complex weights causes issues for learning this objective.  Namely, this objective can be written as a non-convex polynomial in $\cosine \theta$ where $\theta$ is the angle of $\langle Aw, h^*\rangle$ in polar coordinates.

Therefore, we consider a different choice of student link function that will enable a simpler analysis of the dynamics.  For the choice of $g(z) = \frac{\alpha_s}{|\alpha_s|\sqrt{s!}} z^s$, we instead consider the loss:
\begin{align}
    L(w) &= \mathbb{E}_{x \sim V}\left[-\Re{f(\langle h^*, p(x) \rangle\overline{g(\langle Aw, p(x) \rangle)}}\right] + \frac{|\alpha_s|}{2} \|Aw\|^{2s} \\
    &= -|\alpha_s|\Re{\langle Aw, h^* \rangle^s} + \frac{|\alpha_s|}{2} \|Aw\|^{2s}~.
\end{align}
We note that~\citet{dudeja2018learning} used a similar trick of a correlational loss containing a single orthogonal polynomial in order to simplify the learning landscape.  The global minima of this loss, and in fact the dynamics of gradient flow on it, will be explored in the sequel.

\section{Related Work}

\subsection{Single Index Learning}

The conditions under which single-index model learning is possible have been well-explored in previous literature.  The main assumptions that enable provably learning under gradient flow / gradient descent are monotonicity of the link function~\citep{kakade2011efficient,kalai2009isotron,shalev2010learning,yehudai2020learning} or Gaussian input distribution~\citep{arous2021online}.  The former assumptions essentially corresponds to the setting where the information exponent $s = 1$, as it will have positive correlation with a linear term.  Under the latter assumption, the optimal sample complexity was achieved in~\citet{damian2023smoothing}, with study of learning when the link function is not known in~\citet{biettilearning2022}.

When both assumptions are broken, the conditions on the input distribution of rotation invariance or approximate Gaussianity are nevertheless sufficient for learning guarantees~\citep{bruna2023single}.  But more unusual distributions, especially in the complex domain that is most convenient for symmetric networks, are not well studied.

\subsection{Symmetric Neural Networks}

The primary model for symmetric neural networks was introduced in~\citet{zaheer2017deep} as the DeepSets model.  There are many similar models that enforce permutation invariance~\citep{qi2017pointnet,santoro2017simple,lee2019set}, though we focus on DeepSets because of its relationship with the power sum polynomials and orthogonality~\citep{zweig2022exponential}.  We are not aware of any other works that demonstrates provable learning of symmetric functions under gradient-based methods.

\section{Provably Efficient Recovery with Gradient Flow}

\subsection{Defining the Dynamics}

The gradient methods considered in~\citet{arous2021online, arous2022high} are analyzed by reducing to a dimension-free dynamical system of the so-called summary statistics. For instance, in the vanilla single-index model, the summary statistics reduce to the scalar correlation between the learned weight and the true weight.  
In our case, we have three variables, owing to the fact that the correlation is complex and represented by two scalars, and a third variable controlling the norm of the weight since we aren't using projection.

Note that although our weight vector $w$ is complex, we still apply regular gradient flow to the pair of weight vectors $w_R, w_C$ where $w = w_R + iw_C$.  Furthermore, we use the notation $\nabla := \nabla_w = \nabla_{w_R} + i \nabla_{w_C}$.  With that in mind, we can summarize the dynamics of our gradient flow in the following Theorem. 
\begin{theorem}\label{thm:dynamic}
    Given a parameter $w$, consider the summary statistics $m = \langle Aw, h^*\rangle \in \mathbb{C}$ and $v = \|P_{h^*}^\perp A w\|^2$ where $P_{h^*}^\perp$ is projection onto the orthogonal complement of $h^*$.  Let the polar decomposition of $m$ be $re^{i\theta}$.

    Then given the preconditioned gradient flow given by
    \begin{align}
        \dot{w} = - \frac{1}{s|\alpha_s|} (A^\dag A)^{-1} \nabla L(w)~,
    \end{align}
    the summary statistics obey the following system of ordinary differential equations:
\begin{align}
    \dot{r} &= (1-\delta) r^{s-1} \cos s \theta -(v+r^2)^{s-1} r~,\\
    \frac{d}{dt} \cos s \theta &= (1-\delta) sr^{s-2} (1 - \cos^2 s\theta)~,\\
    \dot{v} &= 2 \delta r^s \cos s\theta - 2 (v + r^2)^{s-1}v~,
\end{align}
where $\delta := 1 - \|P_A h^*\|^2$ and $P_A$ is the projection onto the range of $A$.
\end{theorem}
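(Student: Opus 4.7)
The strategy is to first compute the complex gradient $\nabla L(w)$ in closed form, apply the preconditioner to get an explicit ODE for $Aw$, and then project that ODE onto the three scalar summary statistics $r$, $\theta$, and $v$ via direct differentiation.

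\emph{Step 1: Gradient computation.} Using Wirtinger calculus, or equivalently the identity $\nabla_{w_R}+i\nabla_{w_C} = 2\partial_{\overline{w}}$ for real-valued functions, I compute the gradient of each piece of $L$ separately. Since $m(w) := \langle Aw, h^*\rangle = (h^*)^\dag A w$ is holomorphic in $w$, one gets $\nabla_w \Re(m^s) = s\overline{m}^{s-1} A^\dag h^*$. For the regularizer, using $\partial_{\overline{w}}(w^\dag A^\dag A w) = A^\dag A w$, one gets $\nabla_w \|Aw\|^{2s} = 2s\|Aw\|^{2(s-1)} A^\dag A w$. Combining,
\begin{equation*}
\nabla L(w) = -s|\alpha_s|\,\overline{m}^{s-1}A^\dag h^* + s|\alpha_s|\,\|Aw\|^{2(s-1)}A^\dag A w.
\end{equation*}
Applying the preconditioner $\tfrac{1}{s|\alpha_s|}(A^\dag A)^{-1}$ yields $\dot{w} = \overline{m}^{s-1}(A^\dag A)^{-1}A^\dag h^* - \|Aw\|^{2(s-1)} w$, and multiplying by $A$ gives the key dynamical identity
\begin{equation*}
A\dot w = \overline{m}^{s-1}P_A h^* - \|Aw\|^{2(s-1)}Aw,
\end{equation*}
since $A(A^\dag A)^{-1}A^\dag = P_A$ is the projection onto the range of $A$.

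\emph{Step 2: Dynamics of $m$.} Left-multiplying the above by $(h^*)^\dag$ and using $(h^*)^\dag P_A h^* = \|P_A h^*\|^2 = 1-\delta$, as well as $\|Aw\|^2 = r^2+v$, I obtain
\begin{equation*}
\dot m = (1-\delta)\overline{m}^{s-1} - (r^2+v)^{s-1}m.
\end{equation*}
Substituting the polar decomposition $m = re^{i\theta}$ into $\dot m = (\dot r + ir\dot\theta)e^{i\theta}$ and peeling off real and imaginary parts yields $\dot r = (1-\delta)r^{s-1}\cos(s\theta) - (r^2+v)^{s-1}r$ and $\dot\theta = -(1-\delta)r^{s-2}\sin(s\theta)$. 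Differentiating $\cos(s\theta)$ then gives the middle equation of the theorem via the trig identity $\sin^2(s\theta) = 1-\cos^2(s\theta)$.

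\emph{Step 3: Dynamics of $v$.} Since $v = \|Aw\|^2 - r^2$, I compute $\tfrac{d}{dt}\|Aw\|^2 = 2\Re\langle \dot w, A^\dag A w\rangle$; using Step 1, the inner product $w^\dag A^\dag A \dot w$ simplifies to $\overline{m}^s - \|Aw\|^{2s}$ (using $P_A(Aw) = Aw$ and $(Aw)^\dag h^* = \overline{m}$), whose real part is $r^s\cos(s\theta) - (r^2+v)^s$. Subtracting $2r\dot r$ and collecting terms then yields $\dot v = 2\delta r^s\cos(s\theta) - 2(r^2+v)^{s-1}v$, as claimed.

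\emph{Main obstacle.} The calculations are routine in principle, but the bookkeeping of conjugates in Step 1 is the delicate part: one must be careful that the preconditioner exactly cancels the $A^\dag A$ factor in the regularizer gradient while correctly transforming the term $A^\dag h^*$ into $P_A h^*$, which is what produces the clean coefficient $1-\delta$ after contracting with $(h^*)^\dag$. Everything else is direct differentiation plus the polar decomposition of a complex ODE.
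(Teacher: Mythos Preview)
Your proposal is correct and follows essentially the same approach as the paper's proof: Wirtinger calculus to obtain $\nabla L(w) = -s|\alpha_s|\overline{m}^{s-1}A^\dag h^* + s|\alpha_s|\|Aw\|^{2(s-1)}A^\dag A w$, preconditioning to produce the clean $\dot w$ equation, contracting with $h^*$ to get $\dot m$, and then differentiating $\|Aw\|^2 - r^2$ for $\dot v$. The only cosmetic differences are that the paper passes through the Cartesian decomposition $m=a+bi$ before converting to polar (whereas you differentiate $m=re^{i\theta}$ directly), and the paper contracts $\dot w$ with $A^\dag h^*$ rather than first forming $A\dot w$ and then applying $(h^*)^\dag$; both routes are equivalent and lead to the same three scalar ODEs.
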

The proof is in Appendix \ref{proof:dynamic}.  The main technical details come from using Wirtinger calculus to determine how the real and imaginary parts of $w$ evolve under the flow.  Additionally, the correct preconditioner (intuitive from the linear transform of $w$) is crucial for reducing the dynamics to only three summary statistics, and converting to dynamcis on $\cos s \theta$ rather than $\theta$ itself simplifies the description of the learning in the next section dramatically.

\subsection{Provable Learning}

These dynamics naturally motivate the question of learning efficiency, measured in convergence rates in time in the case of gradient flow.  Our main result is that, under some assumptions on the initialization of the frozen weights $\{a_m\}_{m=1}^M$ and the initialized weight vector $w_0$, the efficiency is controlled by the initial correlation with the true direction and the information exponent, just as in the Gaussian case.

\begin{theorem}\label{thm:learning}
    Consider a fixed $\epsilon > 0$.  Suppose the initialization of $w_0$ and $(a_m)_{m=1}^M$ are such that:
    \begin{enumerate}[label=(\roman*)]
        \item Small correlation and anti-concentration at initialization: $0 < r_0 \leq 1$,
        \item Initial phase condition: $\cos s \theta_0 \geq 1/2$, 
        \item Initial magnitude condition for $Aw$: $v_0 = 1 - r_0^2$,
        \item Small Approximation of optimal error: $\delta \leq \min(\epsilon/2, O(s^{-s}r_0^4))$.
    \end{enumerate}
    
    Then if we run the gradient flow given in Theorem~\ref{thm:dynamic} we have $\epsilon$ accuracy in the sense that:
    \begin{align}
            r_T &\geq 1 - \epsilon~,~ \cos s \theta_T \geq 1 - \epsilon ~,~ v_T \leq \epsilon
    \end{align}
    after time $T$, where depending on the information exponent $s$:
    \begin{align}
        T \leq \begin{cases} 
      O\left(\log \frac{1}{\epsilon} \right) & s = 1 ~,\\
      O\left(2^{s^2} r_0^{-4s} + \log \frac{1}{\epsilon} \right) & s > 1~.
   \end{cases}
    \end{align}
\end{theorem}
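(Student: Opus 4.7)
The plan is to analyze the three-dimensional ODE system of Theorem~\ref{thm:dynamic} via a phase argument, tracking $(r,v,\cos s\theta)$ together with the auxiliary norm statistic $R^2 := r^2+v = \|Aw\|^2$. A direct calculation from the two equations for $\dot r$ and $\dot v$ gives
\[
\tfrac{d}{dt}R^2 \;=\; 2 r^s \cos s\theta - 2 R^{2s},
\]
which will be the main workhorse. The first task is to establish three global invariants of the flow. \emph{Phase monotonicity:} the right-hand side of the $\cos s\theta$ equation is manifestly nonnegative, so $\cos s\theta$ is nondecreasing and by (ii) stays $\geq 1/2$ for all $t$. \emph{Norm boundedness:} since $R_0 = 1$ by (iii) and at $R=1$ we have $\tfrac{d}{dt}R^2 \leq 2 r^s - 2 \leq 0$, a supersolution argument yields $R(t) \leq 1$ throughout. \emph{Smallness of $v$:} plugging these two facts into the $v$ ODE gives $\dot v \leq 2\delta - 2 R^{2s-2} v$, and a Gronwall-type comparison controls $v$ by something like $\delta/R^{2s-2}$ plus a transient decaying at integrated rate $\int R^{2s-2}$.

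With these invariants in place, I would split the $r$-dynamics into two phases. In \textbf{Phase I (weak recovery)}, the radius $r$ must grow from $r_0$ up to a fixed constant threshold (say $1/4$). Substituting $\cos s\theta \geq 1/2$ and $R \leq 1$ into the $r$ equation produces a scalar differential inequality whose positive term is of order $r^{s-1}$ and whose damping is $O(R^{2s-2} r)$; separating variables in the idealized $\dot r \gtrsim c\, r^{s-1}$ gives a time of the familiar form $r_0^{-(s-2)}$, and carefully tracking the $s$-dependent constants (with (iv) ensuring the damping remains sub-dominant) should yield the stated $O(2^{s^2} r_0^{-4s})$ bound. In \textbf{Phase II (strong recovery)}, once $r$ is of constant order the trajectory approaches the fixed point $(r,v,\cos s\theta) = (1 - O(\delta), O(\delta), 1)$; linearizing around it shows each coordinate contracts at an $r_0$-independent rate, producing the $O(\log 1/\epsilon)$ term. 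The $s=1$ case skips Phase I altogether since the forcing in $\dot r$ is already order one.

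The main obstacle will be Phase I. At $t=0$ we have $R_0 = 1$, so for $s \geq 2$ a direct check gives $R_0^{2s-2} r_0 > (1-\delta)\, r_0^{s-1}/2$, meaning $\dot r(0) < 0$: the radius actually decreases at first and only turns around once $v$ has drained enough for $R$ to fall below $1$. Making this transient precise---showing that $v$ relaxes onto its quasi-equilibrium $v \sim \delta/R^{2s-2}$ before $r$ shrinks appreciably, and that the subsequent trajectory still satisfies the growth inequality used above---is where assumption (iv) enters quantitatively and where most of the technical work will sit. The apparent loss from the natural rate $r_0^{-(s-2)}$ to the claimed $r_0^{-4s}$ is likely the price of bounding the three-way coupling $(r,v,R)$ uniformly rather than exploiting the quasi-equilibrium exactly, absorbed together with $s^{O(s)}$ combinatorial losses into the $2^{s^2}$ prefactor.
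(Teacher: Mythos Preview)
Your outline has the right architecture and several correct ingredients: the auxiliary ODE $\tfrac{d}{dt}R^2 = 2r^s\cos s\theta - 2R^{2s}$, the invariant $R\leq 1$, and the monotonicity of $\cos s\theta$ are all valid and useful. The two-phase split and the $s=1$ treatment are also in line with the paper.

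The genuine gap is in Phase I, precisely at the obstacle you flag. You propose to show ``$v$ relaxes onto its quasi-equilibrium $v\sim\delta/R^{2s-2}$ before $r$ shrinks appreciably,'' but you give no mechanism that actually prevents $r$ from collapsing during that wait. Your Gronwall bound $\dot v \leq 2\delta - 2R^{2s-2}v$ is useless here because $R$ is not bounded below a priori, and the decay $\dot v \lesssim -v^s$ (which is what you actually get) is only polynomial, so reaching $v$ small takes a long time during which $\dot r < 0$. The paper resolves this with a monotone quantity you did not identify: a direct computation gives
\[
\frac{d}{dt}\log\frac{r^2}{v} \;=\; 2r^{s-2}\cos s\theta\,\Bigl(1-\delta - \delta\,\frac{r^2}{v}\Bigr),
\]
which is nonnegative as long as $v > \tfrac{\delta}{1-\delta}$. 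Hence $r^2/v \geq r_0^2/v_0 \geq r_0^2$ throughout the first phase, and this alone forces $r \geq \tfrac{1}{6}r_0^2$ (plugging the ratio bound back into the $\dot r$ equation). Without this ratio invariant---or something equivalent---your Phase I argument does not close.

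Your reading of the rate is also off. You describe $r_0^{-4s}$ as a ``loss'' from a ``natural'' $r_0^{-(s-2)}$ due to crude coupling bounds, but in fact the dominant cost in the paper is exactly the time for $v$ to decay from $v_0\approx 1$ down to the threshold $v^* = 2^{-s}6^{-2}s^{-2}r_0^4$ via $\dot v \leq -v^s$, giving $T_1 \leq (v^*)^{-(s-1)} = O(2^{s^2}r_0^{-4s})$; the subsequent growth of $r$ (your idealized $\dot r \gtrsim c\,r^{s-1}$) only contributes $O(6^s r_0^{-2(s-1)})$, which is lower order. So the $r_0^{-4s}$ is not slack from the analysis but the intrinsic cost of draining $v$ to the point where the positive term in $\dot r$ can dominate. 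Finally, the paper does not linearize in Phase II; it runs four more explicit sub-phases alternately pushing $r$, $\cos s\theta$, and $v$ past successive thresholds, each via a scalar comparison lemma. Your linearization sketch would need the fixed point to be hyperbolic with uniform (in $s$) spectral gap, which is not established.
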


\begin{remark}
    We note that we only recover $\cos s \theta \approx 1$, rather than a guarantee that $\theta \approx 0$, and so the hidden direction is only determined up to scaling by a $s$th root of unity.  This limitation is may appear to be an issue with the choice of the student link function $g$, but it is unavoidable: if the teacher link function $f(z) = \frac{1}{\sqrt{s!}} z^s$, one can calculate that for any choice of $g$, $L(w)$ is invariant to scaling $w$ by an $s$th root of unity.
\end{remark}

\subsection{Initialization Guarantees}

In order to apply the gradient flow bound proved in Theorem~\ref{thm:learning}, it only remains to understand when the assumptions on initialization are met.  Unlike the single-index setting with Gaussian inputs, the initial correlation is not guaranteed to be on the scale of $\frac{1}{\sqrt{N}}$, but will depend on the activation function and the random weights in the first layer.  Let us introduce the assumptions we'll need:

\begin{assumption}\label{ass:act}
    We assume an analytic activation $\sigma(z) = \sum_{k=0}^\infty c_k z^k$, with the notation $\sigma_+ := \max_{1 \leq k \leq N} |c_k|\sqrt{k}$ and $\sigma_- := \min_{1 \leq k \leq \sqrt{N}} |c_k|\sqrt{k}$.
    We further assume:
    \begin{enumerate}[label=(\roman*)]
        \item $c_k = 0$ iff $k = 0$,
        \item $\sigma$ analytic on the unit disk,
        \item $1/\sigma_- = O(\mathrm{poly}(N))$,
        \item $\sum_{k=N+1}^\infty k|c_k|^2  \leq e^{-O(\sqrt{N})}$.
    \end{enumerate}

\end{assumption}

The first two conditions are simply required for the application of Proposition~\ref{prop:hermite}, as the powersum vector $p$ is built out of polynomials induced by the activation and does not include a constant term.  The second two conditions concern the decay of the coefficients of $\sigma$, in the sense that the decay must start slow but eventually become very rapid.  These conditions are necessary mainly for ensuring the Small Approximation of optimal error condition:

\begin{lemma}\label{lem:projection}
    Let $\sigma$ satisfy Assumption~\ref{ass:act}, and assume $M = O(N^3)$.  Then for any unit norm $h^* \in \mathbb{C}^\infty$ that is only supported on the first $\sqrt{N}$ elements, with probability $1 - 2\exp(-O(N))$:
    $$1 - \|P_A h^* \|^2 \leq e^{-O(\sqrt{N})}~.$$
\end{lemma}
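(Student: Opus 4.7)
The plan is to construct an explicit $w \in \mathbb{C}^M$ for which $Aw$ is exponentially close to $h^*$, which suffices because $1 - \|P_A h^*\|^2 = \|h^* - P_A h^*\|^2 \leq \|Aw - h^*\|^2$ for any $w$. Expanding the activation and using $c_0 = 0$ from Assumption~\ref{ass:act}(i) gives $\Phi_m(x) = \sum_{k\geq 1} c_k \sqrt{k}\, a_m^k p_k(x)$, so the matrix of Proposition~\ref{prop:A_matrix} has entries $A_{km} = c_k\sqrt{k}\, a_m^k$. I factor $A = DV$ with $D_{kk} = c_k\sqrt{k}$ and $V_{km} = a_m^k$, and write $A_1 = D_1 V_1$ for the truncation to the first $N$ rows. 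The candidate $w$ will be the minimum-norm solution of $A_1 w = h^*$, interpreting $h^*$ as its zero-padded embedding in $\mathbb{C}^N$ (which coincides with $h^*$ by the support hypothesis on $h^*$).

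The central technical step is to show that the Gram matrix $G := M^{-1} V_1 V_1^\dag$ is close to $I_N$. Since $|a_m|=1$, $G_{jk} = M^{-1}\sum_{m=1}^M a_m^{j-k}$, which is $1$ on the diagonal and, for $j \neq k$, an empirical mean of $M$ i.i.d.\ uniform draws from $\S$. Applying Hoeffding's inequality to the real and imaginary parts with deviation parameter of order $\sqrt{N/M}$ and union-bounding over the $O(N)$ shifts $j-k \neq 0$ gives, with probability $1 - 2\exp(-\Omega(N))$,
\begin{align}
\|G - I\|_{\mathrm{op}} \;\leq\; \|G - I\|_F \;=\; O\!\left(N^{3/2}/\sqrt{M}\right) \;\leq\; \tfrac{1}{2},
\end{align}
using the hypothesis $M \gtrsim N^3$. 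In particular $G$ is invertible with $\|G^{-1}\|_{\mathrm{op}} \leq 2$.

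The rest is algebra. Using $A_1 A_1^\dag = M D_1 G D_1^\dag$, a direct calculation gives $\|w\|^2 = M^{-1} (D_1^{-1} h^*)^\dag G^{-1} (D_1^{-1} h^*) \leq 2 M^{-1}\sigma_-^{-2}$, where the last step uses Assumption~\ref{ass:act}(iii) on the support of $h^*$. By construction $(Aw)_k = h^*_k$ for $k \leq N$, so all the residual mass lives at $k > N$; applying the rowwise Cauchy--Schwarz bound $|(Vw)_k|^2 \leq M\|w\|^2$ and invoking Assumption~\ref{ass:act}(iv),
\begin{align}
\|Aw - h^*\|^2 \;=\; \sum_{k > N} |c_k|^2\, k\, |(Vw)_k|^2 \;\leq\; M\|w\|^2 \sum_{k>N} k\,|c_k|^2 \;\leq\; \frac{2}{\sigma_-^2}\, e^{-\Omega(\sqrt N)},
\end{align}
which is $e^{-\Omega(\sqrt N)}$ since $1/\sigma_-$ is polynomial.

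The main obstacle is controlling the conditioning of $V_1 V_1^\dag$: naive entrywise Hoeffding suffices for $M = \Omega(N^3)$, but the $N^{3/2}/\sqrt M$ bound is loose, and a matrix Chernoff argument on the rank-one decomposition $V_1 V_1^\dag = \sum_m v_m v_m^\dag$ with $\|v_m\|^2 = N$ would yield a sharper $M = \widetilde O(N)$ threshold. Everything else---the pseudoinverse identity, the rowwise Cauchy--Schwarz bound, the tail sum---is routine once the Vandermonde Gram is under control.
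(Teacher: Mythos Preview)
Your proof is correct and follows essentially the same skeleton as the paper: choose $w$ as the minimum-norm solution of $A_1 w = h^*$ so that the first $N$ coordinates match exactly, then bound the tail $\sum_{k>N} k|c_k|^2$ via Assumption~\ref{ass:act}(iv), with the polynomial factor $\sigma_-^{-2}$ absorbed into the exponential. The only substantive difference is in how you control the conditioning of the truncated Vandermonde matrix $V_1$. The paper invokes a subgaussian matrix concentration result (Theorem~4.6.1 of Vershynin) on the independent isotropic columns of $V_1^T$, obtaining $\sigma_N(V_1) = \Theta(\sqrt M)$ directly and then bounding $\|X^+\|$. You instead observe that the off-diagonal entries of $G = M^{-1}V_1V_1^\dag$ are empirical means of uniform $\S$ variables indexed by the shift $j-k$, apply scalar Hoeffding with a union bound over $O(N)$ shifts, and pass through $\|G-I\|_F$. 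Your route is more elementary and self-contained, at the cost of the cruder $N^{3/2}/\sqrt M$ Frobenius bound (which, as you note, is exactly what forces $M \gtrsim N^3$ rather than the $\widetilde O(N)$ one would get from matrix Chernoff). Both arguments land at the same quantitative conclusion under the stated $M = O(N^3)$ hypothesis.
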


Lastly, we can choose an initialization scheme for $w$ which handily ensures the remaining assumptions we need to apply Theorem~\ref{thm:learning}.  The crucial features of $\sigma$ are similar to the previous result.  Namely, we want the initial correlation $r_0$ to be non-negligible because this directly controls the runtime of gradient flow.  Slow initial decay with fast late decay of the $\sigma$ coefficients directly implies that $Aw_0$ has a lot of mass in the first $\sqrt{N}$ indices and very little mass past the first $N$ indices.  These requirements rule out, say, $\exp$ as an analytic activation because the coefficients decay too rapidly.

\begin{lemma}\label{lem:init_concentration}
    Suppose $w$ is sampled from a standard complex Gaussian on $M$ variables.  It follows that if we set $w_0 = \frac{w}{\|A w\|}$, and use the summary statistics from Theorem~\ref{thm:dynamic}, then with probability $1/3 - 2\exp(-O(N))$ and any $h^*$ as in Lemma~\ref{lem:projection}
    \begin{enumerate}[label=(\roman*)]
        \item $1 \geq r_0 \geq c\frac{\sigma_-}{\sigma_+\sqrt{M}}$ for some universal constant $c > 0$,
        \item $\cos s \theta_0 \geq 1/2$, 
        \item $v_0 = 1 - r_0^2$.
    \end{enumerate}
\end{lemma}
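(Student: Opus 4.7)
The plan is to handle the three conditions in order, with (iii) essentially free and (i) the main work. For (iii), the construction $w_0 = w/\|A w\|$ gives $\|A w_0\| = 1$ identically, so the Pythagorean decomposition
\begin{align}
r_0^2 + v_0 = |\langle A w_0, h^*\rangle|^2 + \|P_{h^*}^\perp A w_0\|^2 = \|A w_0\|^2 = 1
\end{align}
(using that $h^*$ is unit-norm) yields $v_0 = 1 - r_0^2$ without any probabilistic input.

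For (i) and (ii), set $u := A^\dag h^*$ and $Z := \langle w, u \rangle$. Since $w \sim \mathcal{CN}(0, I_M)$, one has $Z \sim \mathcal{CN}(0, \|u\|^2)$, so writing $Z = |Z| e^{i\theta_0}$, the modulus $|Z|$ is Rayleigh and $\theta_0$ is uniform on $[0, 2\pi)$, independent of $|Z|$. Because $\|A w\|$ is a positive real scaling, $\theta_0$ is also the argument of $\langle A w_0, h^*\rangle$, so uniformity immediately gives $\mathbb{P}(\cos s\theta_0 \geq 1/2) = 1/3$ exactly, yielding (ii). To lower-bound $r_0 = |Z|/\|A w\|$ in (i), I would assemble three concentration ingredients. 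First, by Hanson-Wright applied to the quadratic form $\|A w\|^2 = w^\dag A^\dag A w$, one has $\|A w\|^2 \leq 2\,\text{tr}(A^\dag A)$ with probability $1 - e^{-\Omega(M)}$; Assumption~\ref{ass:act} then bounds $\text{tr}(A^\dag A) = M \sum_{k \geq 1} k |c_k|^2 \leq M N \sigma_+^2 + M e^{-\Omega(\sqrt{N})}$. Second, $\|u\|^2 = \sum_m X_m$ with $X_m := |\sum_k \sqrt{k}\,\overline{c_k}\,\overline{a_m}^k h^*_k|^2$ i.i.d.; using $\mathbb{E}_{a \sim \S}[\overline{a}^k a^{k'}] = \delta_{kk'}$ together with the support assumption on $h^*$ gives $\mathbb{E}[X_m] \geq \sigma_-^2$, and a Bernstein-type bound then yields $\|u\|^2 \geq M \sigma_-^2 / 2$ with probability $1 - e^{-\Omega(N)}$ (using $M = O(N^3)$ to tame the sub-exponential parameter). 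Third, conditional on $\{a_m\}$, $|Z|^2/\|u\|^2 \sim \text{Exp}(1)$, so for a small universal $c_0 > 0$ one has $|Z|^2 \geq c_0 \|u\|^2$ with probability close to one. Combining and using $N \leq M$ gives
\begin{align}
r_0 \;\geq\; c\,\frac{\sigma_-}{\sqrt{N}\,\sigma_+} \;\geq\; c\,\frac{\sigma_-}{\sqrt{M}\,\sigma_+}.
\end{align}

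The main obstacle is the probability bookkeeping. Since (ii) has probability exactly $1/3$, the joint probability of all three conditions cannot exceed $1/3$, and the $-2e^{-\Omega(N)}$ slack in the target statement must absorb only the failure of the concentration events conditional on (ii). The right lens is the independence of $\arg Z$ from $|Z|$ and from the $u$-orthogonal component $w^\perp$ of $w$: decomposing $w = \alpha \hat{u} + w^\perp$, the dominant contribution to $\|A w\|^2$ is $\|A w^\perp\|^2$, which depends only on $w^\perp$, so conditioning on $\cos s\theta_0 = \cos(s \arg \alpha) \geq 1/2$ does not perturb the distributions entering the bounds on $\|A w\|$ or on $|Z| = |\alpha|\|u\|$; the event controlling $\|u\|$ is a function of $\{a_m\}$ alone and is independent of $w$ entirely. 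Tracking this conditional independence, and choosing $c_0$ small enough that the Exp$(1)$ tail contributes only $e^{-c_0}$ close to $1$, converts the three constant-failure-probability concentrations into the claimed joint bound of $1/3 - 2 e^{-\Omega(N)}$.
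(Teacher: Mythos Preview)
Your arguments for (ii) and (iii) match the paper's exactly. For (i) you take a different route: the paper exploits the block decomposition of $A$ into its first $N$ rows $B = DX$ (with $X_{km} = a_m^k$ the rectangular Vandermonde in the frozen weights) and a tail $C$, then uses the singular-value concentration of $X$ from Lemma~\ref{lem:X_singular} to control $\|B^\dag u\| \geq \sigma_N(X)\,\sigma_-$ for the numerator and $\|Aw\| \leq (\|B\|+\|C\|)\|w\| \leq O(\sigma_+\,\sigma_1(X)\,\sqrt{M})$ for the denominator, each with failure probability $e^{-\Omega(N)}$.

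Your Hanson--Wright step has a genuine gap. The claim that $\|Aw\|^2 \leq 2\,\mathrm{tr}(A^\dag A)$ holds with probability $1 - e^{-\Omega(M)}$ requires the effective rank $\mathrm{tr}(A^\dag A)/\|A^\dag A\|_{\mathrm{op}}$ to be of order $M$. But the nonzero eigenvalues of $A^\dag A$ coincide with those of $AA^\dag$, which (up to fluctuations in the $a_m$) are approximately $M\,k|c_k|^2$; the effective rank is therefore $\sum_k k|c_k|^2/\sigma_+^2$, and for any activation in Assumption~\ref{ass:act} with decaying coefficients (e.g.\ the $\arctan$ choice of Proposition~\ref{prop:arctan}) this is only $O(\log N)$, not $\Omega(N)$ let alone $\Omega(M)$. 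Hanson--Wright then yields at best polynomial failure probability. To repair this you would need a separate high-probability bound on $\|A\|_{\mathrm{op}}$, which essentially forces you back to the Vandermonde singular-value lemma; at that point the paper's direct bound $\|Aw\| \leq \|A\|_{\mathrm{op}}\|w\|$ is simpler and already delivers the exponential tails. Your Bernstein argument for $\|A^\dag h^*\|^2$ is a valid alternative to the paper's $\sigma_N(X)$ lower bound, and your independence bookkeeping for $\theta_0$ versus $r_0$ is correct.
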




Finally, we consider a straightforward choice of $\sigma$ that meets Assumption~\ref{ass:act} so that we can arrive at an explicit complexity bound on learning:
\begin{corollary}[Non-asymptotic Rates for Gradient Flow]\label{coro:learning}
    And Consider $\xi = 1 - \frac{1}{\sqrt{N}}$ and the specific choice of activation $$\sigma(z) = \arctan{\xi z} + \xi z \arctan{\xi z}~. $$

    Suppose we initialize $w$ from a standard complex Gaussian in dimension $M$ with $M = O(N^3)$, and $\{a_m\}_{m=1}^M \sim \S$ iid.  Furthermore, treat $s$ and $\epsilon$ as constants relative to $N$.  Then with probability $1/3 - 2\exp(-O(N))$, we will recover $\epsilon$ accuracy in time
    \begin{align}
        T \leq \begin{cases} 
      O\left(\log \frac{1}{\epsilon} \right) & s = 1 \\
      O\left(2^{s^2} N^{7s} + \log \frac{1}{\epsilon} \right) & s > 1~.
   \end{cases}
    \end{align}
\end{corollary}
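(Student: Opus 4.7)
The plan is to verify that the given activation $\sigma$ satisfies Assumption~\ref{ass:act}, then invoke Lemmas~\ref{lem:projection} and~\ref{lem:init_concentration} to produce all four hypotheses of Theorem~\ref{thm:learning}, and finally substitute the resulting value of $r_0$ into the runtime bound of that theorem.

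For the first step, I would expand
\begin{align}
\arctan(\xi z) = \sum_{j \geq 0} \frac{(-1)^j \xi^{2j+1}}{2j+1} z^{2j+1}, \qquad \xi z \arctan(\xi z) = \sum_{j \geq 0} \frac{(-1)^j \xi^{2j+2}}{2j+1} z^{2j+2},
\end{align}
so the coefficient sequence $(c_k)_{k \geq 1}$ of $\sigma$ satisfies $|c_k| = \Theta(\xi^k/k)$ with $c_0 = 0$. Conditions (i) and (ii) of Assumption~\ref{ass:act} are then immediate (the latter because $|\xi| < 1$). For (iii), $|c_k|\sqrt{k} \asymp \xi^k/\sqrt{k}$; on $1 \leq k \leq \sqrt{N}$ the factor $\xi^k$ is bounded below by $(1-1/\sqrt{N})^{\sqrt{N}} = \Omega(1)$, so $\sigma_- = \Omega(N^{-1/4})$ and $1/\sigma_-$ is polynomial in $N$, while $\sigma_+ = O(|c_1|) = O(1)$ from the $k=1$ term. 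For (iv), a geometric summation gives
\begin{align}
\sum_{k > N} k |c_k|^2 \lesssim \sum_{k > N} \frac{\xi^{2k}}{k} \leq \frac{\xi^{2N+2}}{1-\xi^2},
\end{align}
where $1/(1-\xi^2) = O(\sqrt{N})$ and $\xi^{2N} \leq e^{-2\sqrt{N}(1 - o(1))}$, so the tail is $e^{-\Omega(\sqrt{N})}$.

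With Assumption~\ref{ass:act} verified and $M = O(N^3)$, Lemma~\ref{lem:projection} yields $\delta = 1 - \|P_A h^*\|^2 \leq e^{-\Omega(\sqrt{N})}$ with probability $1 - 2\exp(-\Omega(N))$. Applying Lemma~\ref{lem:init_concentration} to the normalized Gaussian initialization supplies conditions (ii) and (iii) of Theorem~\ref{thm:learning}, together with
\begin{align}
r_0 \geq c\,\frac{\sigma_-}{\sigma_+ \sqrt{M}} = \Omega\!\left(\frac{N^{-1/4}}{N^{3/2}}\right) = \Omega(N^{-7/4}).
\end{align}
It remains to check condition (iv): since $s$ and $\epsilon$ are treated as constants, $s^{-s} r_0^4 = \Omega(N^{-7})$ is merely polynomially small in $N$, while $\delta = e^{-\Omega(\sqrt{N})}$ is super-polynomially small, so the bound $\delta \leq \min(\epsilon/2,\, O(s^{-s} r_0^4))$ holds for all sufficiently large $N$. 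Substituting $r_0^{-4s} = O(N^{7s})$ into the runtime bound of Theorem~\ref{thm:learning} yields exactly the two stated cases, and a union bound on the two probabilistic events gives the claimed success probability $1/3 - 2\exp(-\Omega(N))$.

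The main technical obstacle is the tail estimate in the first step: one must confirm that the polynomial blow-up $1/(1-\xi^2) = O(\sqrt{N})$ is dominated by $\xi^{2N} = e^{-\Omega(\sqrt{N})}$. The choice $\xi = 1 - 1/\sqrt{N}$ is calibrated precisely so that $\xi^k$ stays $\Theta(1)$ on the range $k \leq \sqrt{N}$ where $\sigma_-$ is measured (keeping the initial correlation polynomial in $N$) while $\xi^{2N}$ decays rapidly enough to kill the tail past $k = N$. Once Assumption~\ref{ass:act} is in hand, the remainder is a direct composition of the two lemmas with Theorem~\ref{thm:learning}.
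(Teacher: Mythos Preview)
Your proposal is correct and follows essentially the same route as the paper: verify Assumption~\ref{ass:act} for the specific $\sigma$, invoke Lemmas~\ref{lem:projection} and~\ref{lem:init_concentration} to obtain the four hypotheses of Theorem~\ref{thm:learning}, and substitute $r_0 = \Omega(N^{-7/4})$ into the runtime. The only cosmetic difference is that the paper packages the verification of Assumption~\ref{ass:act} into a separate Proposition~\ref{prop:arctan}, whereas you carry it out inline; your justification for $\sigma_+ = O(1)$ (``from the $k=1$ term'') is slightly loose since $\sigma_+$ is a maximum over $1\le k\le N$, but the conclusion is correct because $|c_k|\sqrt{k}\lesssim \xi^k/\sqrt{k}$ is bounded by an absolute constant.
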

\begin{proof}
    By Proposition~\ref{prop:arctan}, the activation $\sigma$ given in the corollary statement satisfies Assumption~\ref{ass:act}, so we can apply Lemma~\ref{lem:init_concentration} and Lemma~\ref{lem:projection} to satisfy the requirements of Theorem~\ref{thm:learning}.  In particular, the fourth condition is given by assuming $e^{-O(\sqrt{N})} \leq \min(\epsilon/2, O(s^{-s}r_0^4))$ which is true when $s$ is constant, and $\epsilon$ and $r_0$ are at most polynomial compared to $N$.

    Note that $\sigma_+ = O(1)$ and $\sigma_- = O\left(\frac{1}{N^{1/4}}\right)$, so it follows that $r_0 \geq O\left(\frac{1}{N^{7/4}} \right)$ with probability $1/3 - 2\exp(-O(N))$.  Conditioning on this bound gives the desired bound on the time for $\epsilon$ accuracy.
\end{proof}

Hence, we have a rate that, for $s = O(1)$, is not cursed by dimensionality to recover the true hidden direction $h^*$.  As mentioned above, there are two caveats to this recovery: $w$ is only recovered up to an $s$th root of unity, and to directly make predictions of the teacher model would require using the teacher link function rather than using the student model directly.

Since this result concerns gradient flow over the population loss, a natural question is what barriers exist that stymie the SGD analysis of recent single index papers~\citep{arous2021online,damian2023smoothing,bruna2023single}. These works treat the convergence of SGD by a standard drift and martingale argument, where the drift follows the population gradient flow, and the martingales are shown to be controlled via standard concentration inequalities and careful arguments around stopping times.
Applying these tactics to a discretized version of the dynamics given in Theorem~\ref{thm:dynamic} mainly runs into an issue during the first phase of training.  Unlike in~\citet{arous2021online} where the drift dynamics have the correlation monotonically increasing towards $1$, at the start of our dynamics the correlation magnitude $r$ and the ``orthogonal" part of the learned parameter $v$ are both decreasing (with high probability over the initialization).  Showing that this behavior doesn't draw the model towards the saddle point where $r=0$ requires showing that $v$ decreases meaningfully faster than $r$, i.e. showing that $\frac{d}{dt} \log \frac{r^2}{v}$ is positive.  It's not clear what quality of bounds the martingale concentration inequalities would provide for this quantity, and we leave for future work if the six stage proof of the dynamics behavior could be successfully discretized.

\section{Experiments}


To study an experimental setup for our setting, we consider the student-teacher setup outlined above with gradient descent.  We consider $N = 25$, $M = 100$, and approximate the matrix $A$ by capping the infinite number of rows at $150$, which was sufficient for $1 - \|P_A h^*\|^2 \leq 0.001$ in numerical experiments.  For the link function $f$, we choose its only non-zero monomial coefficients to be $\alpha_3 = \alpha_4 = \alpha_5 = \frac{1}{\sqrt{3}}$.  And correspondingly, $g$ simply has $\alpha_3 = 1$ and all other coefficients at zero.

We choose for convenience an activation function such that $A_{km} = \left(\frac{N-1}{N}\right)^ka_m^k$.  We make this choice because, while obeying all the assumptions required in Assumption~\ref{ass:act}, this choice implies that the action of $A$ on the elementary basis vectors $e_j$ for $1 \leq j \leq \sqrt{N}$ is approximately distributed the same.  This choice means that $\|P_A h^*\|$ is less dependent on the choice of $h^*$, and therefore reduces the variance in our experiments when we choose $h^*$ uniformly among unit norm vectors with support on the first $\sqrt{N}$ elements, i.e. uniformly from the complex sphere in degree $\sqrt{N}$.

Under this setup, we train full gradient descent on 50000 samples from the Vandermonde $V$ distribution under 20000 iterations.  The only parameter to be tuned is the learning rate, and we observe over the small grid of $[0.001, 0.0025, 0.005]$ that a learning rate of $0.0025$ performs best for the both models in terms of probability of $r$ reaching approximately $1$, i.e. strong recovery.

As described in Theorem~\ref{thm:dynamic}, we use preconditioned gradient descent using $(A^\dag A)^{-1}$ as the preconditioner, which can be calculated once at the beginning of the algorithm and is an easy alteration to vanilla gradient descent to implement.  We use the pseudoinverse for improved stability in calculating this matrix, although we note that this preconditioner doesn't introduce stability issues into the updates of our summary statistics, even in the case of gradient descent.  Indeed, even if one considers the loss $L(w)$ under an empirical expectation rather than full expectation, the gradient $\nabla L(w)$ can still be seen to be written in the form $A\dag v$ for some vector $v$.  If one preconditions this gradient by $(A^\dag A)^{-1}$, and observes that the summary statisics $m$ and $v$ both depend on $Aw$ rather than $w$ directly, it follows that the gradient update on these statistics is always of the form $A(A^\dag A)^{-1} A^\dag = P_A$, so even in the empirical case this preconditioner doesn't introduce exploding gradients.


\begin{figure*}[ht]
\centering

\begin{subfigure}{.45\textwidth}
  \centering
  \includegraphics[width=1.\linewidth]{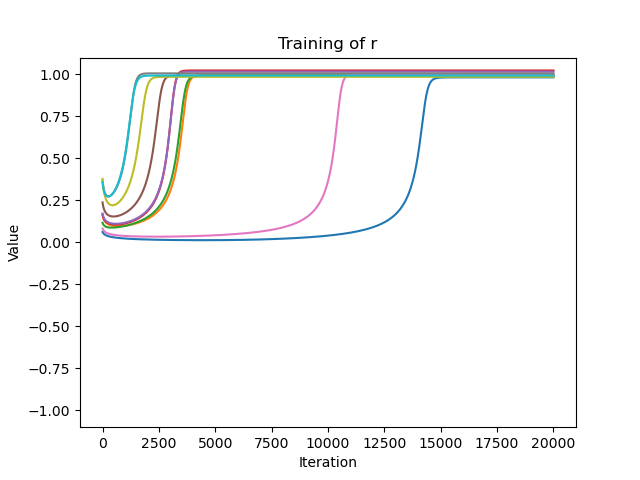}
\end{subfigure}%
\begin{subfigure}{.45\textwidth}
  \centering
  \includegraphics[width=1.\linewidth]{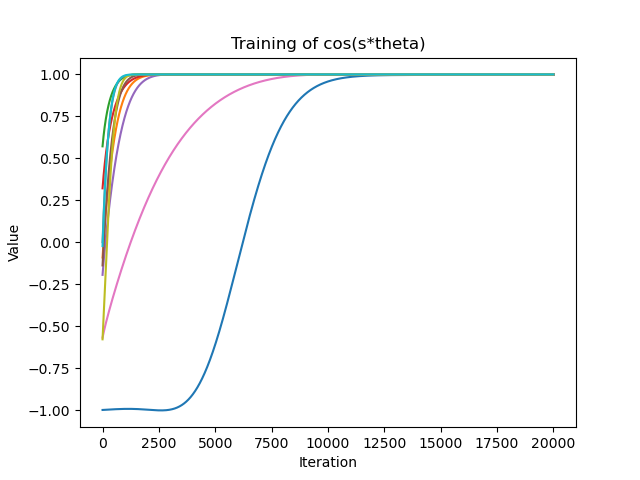}
\end{subfigure}%

\begin{subfigure}{.45\textwidth}
  \centering
  \includegraphics[width=1.\linewidth]{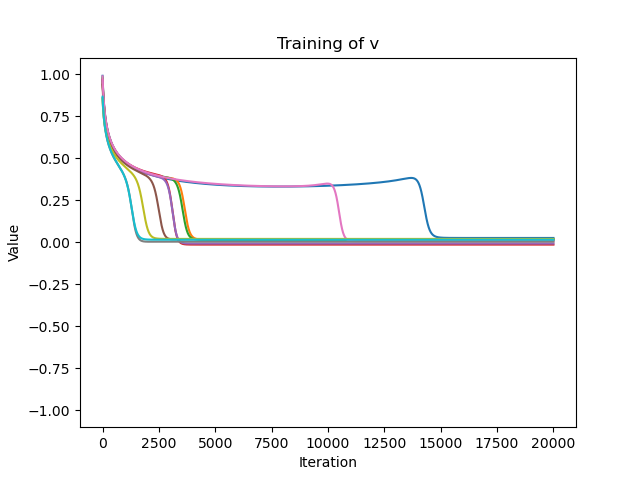}
\end{subfigure}%
\begin{subfigure}{.45\textwidth}
  \centering
  \includegraphics[width=1.\linewidth]{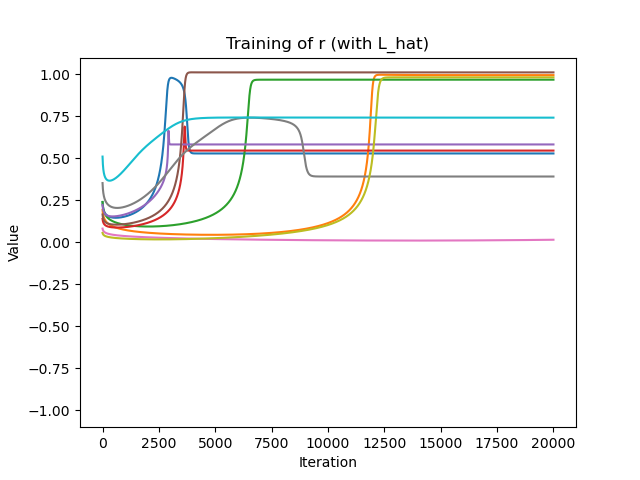}
\end{subfigure}%

\caption{The learning trajectory, over ten idependent runs, of the three summary statistics in the case of our chosen loss function $L$, and the trajectory of the $r$ statistic for the more complicated loss function $\hat{L}$}
\vspace{-0.5cm}
\label{fig:plots}
\end{figure*}

\section{Discussion}

\subsection{Experimental Results}

The outcomes of our experiments are given in Figure~\ref{fig:plots}.  We observe very high rates of strong recovery using the loss $L$.  For the loss $\hat{L}$, we note that $r$ often becomes stuck, indicating the model has reached a local minima.

We note that our analysis is somewhat pessimistic, as the experimental gradient descent on $L(w)$ will often achieve near perfect accuracy even if $\cos s \theta_0 < 0$.  This is mainly an issue of proof technique: although $\cos s \theta$ is always increasing under the dynamics, $r$ is necessarily decreasing for as long as $\cos s \theta$ is negative.  It is quite subtle to control whether $\cos s \theta$ will become positive before $r$ becomes extremely small, and the initialization of $r$ is the main feature that controls the runtime of the model.  However the empirical results suggest that a chance of success $> 1/2$ is possible under a more delicate analysis.

However, the analysis given in the proof of Theorem~\ref{thm:learning} does accurately capture the brief dip in the value of $r$ in the initial part of training, when the regularization contributes more to the gradient than the correlation until $\cos s \theta$ becomes positive.

Because we can only run experiments on gradient descent rather than gradient flow, we observe the phenomenon of search vs descent studied in~\citet{arous2021online}, where the increase in the corrleation term $r$ is very slow and then abruptly increases.
For the model trained with $\hat{L}$, we observe that there is much greater likelihood of failure in the recovery, as $r$ appears to become stuck below the optimal value of $1$.

\subsection{Extensions}

The success of this method of analysis depends heavily on the Hermite-like identity in Proposition~\ref{prop:hermite}.  In general, many of the existing results analyzing single index models need to assume either Gaussian inputs, or uniformly distributed inputs on the Boolean hypercube (see for example~\citet{abbe2023sgd}).  In some sense, this works cements the inclusion of the Vandermonde distribution in this set of measures that enable clean analysis.  The proof techniques for these three measures are quite disparate, so it remains open to determine if there is a wider class of  ``nice" distributions where gradient dynamics can be succcessfully analyzed.

Additionally, the success of the multi-layer training in~\citet{biettilearning2022,mahankali2023beyond} suggests that simultaneously training the frozen first layer weights may not prohibit the convergence analysis.  The matrix $A$ depends on the first layer weights through a Vandermonde matrix (see $X$ in the proof of Lemma~\ref{lem:projection}), and the simple characterization of the derivative of a Vandermonde matrix alludes to further possibilities for clean analysis.

\subsection{Limitations}

A first limitation is the focus of this work on complex inputs, analytic activations, and fixed input distribution (namely the squared Vandermonde density).  Although complex analytic functions are less commonly studied in the literature, they do still appear in settings like quantum chemistry~\citep{beau2021parent,langmann2005method}.  Regarding the focus on the Vandermonde distribution, we note this is similiar to the vanilla single-index setting in the restriction to Gaussian inputs, under which the theory is particularly powerful, simplest and understanding of non-Gaussian data is still nascent.

A second limitation is that this work focuses on input distributions over sets of scalars, whereas typically symmetric neural networks are applied to sets of high-dimensional vectors.  Proposition~\ref{prop:hermite} does not work out of the box for these settings without a high-dimensional analogue of the inner product $\langle \cdot, \cdot \rangle_V$ with similar orthogonality properties.  It is possible to define such an inner products on the so-called multisymmetric powersums with similar orthogonality~\citep{zweig2022exponential}, and we leave to future work the question of whether such inner products could grant similar guarantees about the learning dynamics in this more realistic setting.

\section{Conclusion}

In this work we've shown a first positive result that quantifies the ability of gradient descent to perform symmetric feature learning, by  adapting and extending the tools of two-layer single index models. In essence, this is made possible by a `miracle', namely the fact that certain powersum expansions under the Vandermonde measure enjoy the same semigroup structure as Hermite polynomials under the Gaussian measure (Proposition \ref{prop:hermite}) --- leading to a dimension-free summary statistic representation of the loss. Although the resulting dynamics are more intricate than in the Euclidean setting, we are nonetheless able to establish quantitative convergence rates to `escape the mediocrity' of initialization, recovering the same main ingredients as in previous works \cite{arous2021online, abbe2022merged}, driven by the information exponent. To our knowledge, this is the first work to show how learning with gradient based methods necessarily succeeds in this fully non-linear 
 (i.e. not in the NTK regime) setting. Nevertheless, there are many lingering questions.

As discussed, one limitation of the analysis is the reliance on gradient flow rather than gradient descent.  We hope that in future work we'll be able to effectively discretize the dynamics, made more challenging by the fact that one must track three parameters rather than simply the correlation. 
Still, we observe theoretically and empirically that the symmetric single index setting demands a number of unusual choices, such as a correlation loss and distinct student and teacher link function, in order to enable efficient learning.  And in a broader scheme, if one remembers the perspective of DeepSets as a very limited form of a three-layer architecture, the issue of provable learning for deeper, more realistic architectures stands as a very important and unexplored research direction --- and where Transformers with planted low-dimensional structures appear as the next natural question.


\bibliography{iclr2024_conference}

\begin{thebibliography}{31}
\providecommand{\natexlab}[1]{#1}
\providecommand{\url}[1]{\texttt{#1}}
\expandafter\ifx\csname urlstyle\endcsname\relax
  \providecommand{\doi}[1]{doi: #1}\else
  \providecommand{\doi}{doi: \begingroup \urlstyle{rm}\Url}\fi

\bibitem[Abbe et~al.(2022)Abbe, Boix-Adsera, and Misiakiewicz]{abbe2022merged}
Emmanuel Abbe, Enric Boix-Adsera, and Theodor Misiakiewicz.
\newblock The merged-staircase property: a necessary and nearly sufficient
  condition for sgd learning of sparse functions on two-layer neural networks.
\newblock \emph{arXiv preprint arXiv:2202.08658}, 2022.

\bibitem[Abbe et~al.(2023)Abbe, Boix-Adsera, and Misiakiewicz]{abbe2023sgd}
Emmanuel Abbe, Enric Boix-Adsera, and Theodor Misiakiewicz.
\newblock Sgd learning on neural networks: leap complexity and saddle-to-saddle
  dynamics.
\newblock \emph{arXiv preprint arXiv:2302.11055}, 2023.

\bibitem[Arnaboldi et~al.(2023)Arnaboldi, Stephan, Krzakala, and
  Loureiro]{arnaboldi2023high}
Luca Arnaboldi, Ludovic Stephan, Florent Krzakala, and Bruno Loureiro.
\newblock From high-dimensional \& mean-field dynamics to dimensionless odes: A
  unifying approach to sgd in two-layers networks.
\newblock \emph{arXiv preprint arXiv:2302.05882}, 2023.

\bibitem[Arous et~al.(2021)Arous, Gheissari, and Jagannath]{arous2021online}
Gerard~Ben Arous, Reza Gheissari, and Aukosh Jagannath.
\newblock Online stochastic gradient descent on non-convex losses from
  high-dimensional inference.
\newblock \emph{The Journal of Machine Learning Research}, 22\penalty0
  (1):\penalty0 4788--4838, 2021.

\bibitem[Beau \& del Campo(2021)Beau and del Campo]{beau2021parent}
Mathieu Beau and Adolfo del Campo.
\newblock Parent hamiltonians of jastrow wavefunctions.
\newblock \emph{SciPost Physics Core}, 4\penalty0 (4):\penalty0 030, 2021.

\bibitem[Ben~Arous et~al.(2022)Ben~Arous, Gheissari, and
  Jagannath]{arous2022high}
Gerard Ben~Arous, Reza Gheissari, and Aukosh Jagannath.
\newblock High-dimensional limit theorems for sgd: Effective dynamics and
  critical scaling.
\newblock \emph{arXiv preprint arXiv:2206.04030}, 2022.

\bibitem[Bietti et~al.(2022)Bietti, Bruna, Sanford, and
  Song]{biettilearning2022}
Alberto Bietti, Joan Bruna, Clayton Sanford, and Min~Jae Song.
\newblock Learning single-index models with shallow neural networks.
\newblock In \emph{Advances in Neural Information Processing Systems}, 2022.

\bibitem[Bihari(1956)]{bihari1956generalization}
Imre Bihari.
\newblock A generalization of a lemma of bellman and its application to
  uniqueness problems of differential equations.
\newblock \emph{Acta Mathematica Hungarica}, 7\penalty0 (1):\penalty0 81--94,
  1956.

\bibitem[Bruna et~al.(2023)Bruna, Pillaud-Vivien, and Zweig]{bruna2023single}
Joan Bruna, Loucas Pillaud-Vivien, and Aaron Zweig.
\newblock On single index models beyond gaussian data.
\newblock \emph{arXiv preprint arXiv:2307.15804}, 2023.

\bibitem[Damian et~al.(2023)Damian, Nichani, Ge, and Lee]{damian2023smoothing}
Alex Damian, Eshaan Nichani, Rong Ge, and Jason~D Lee.
\newblock Smoothing the landscape boosts the signal for sgd: Optimal sample
  complexity for learning single index models.
\newblock \emph{arXiv preprint arXiv:2305.10633}, 2023.

\bibitem[Damian et~al.(2022)Damian, Lee, and Soltanolkotabi]{damian2022neural}
Alexandru Damian, Jason Lee, and Mahdi Soltanolkotabi.
\newblock Neural networks can learn representations with gradient descent.
\newblock In \emph{Conference on Learning Theory}, 2022.

\bibitem[Diaconis \& Shahshahani(1994)Diaconis and
  Shahshahani]{diaconis1994eigenvalues}
Persi Diaconis and Mehrdad Shahshahani.
\newblock On the eigenvalues of random matrices.
\newblock \emph{Journal of Applied Probability}, 31\penalty0 (A):\penalty0
  49--62, 1994.

\bibitem[Dudeja \& Hsu(2018)Dudeja and Hsu]{dudeja2018learning}
Rishabh Dudeja and Daniel Hsu.
\newblock Learning single-index models in gaussian space.
\newblock In \emph{Conference On Learning Theory}, pp.\  1887--1930. PMLR,
  2018.

\bibitem[Fischer(2005)]{fischer2005precoding}
Robert~FH Fischer.
\newblock \emph{Precoding and signal shaping for digital transmission}.
\newblock John Wiley \& Sons, 2005.

\bibitem[Jacobsen(1996)]{jacobsen1996laplace}
Martin Jacobsen.
\newblock Laplace and the origin of the ornstein-uhlenbeck process.
\newblock \emph{Bernoulli}, 2\penalty0 (3):\penalty0 271--286, 1996.

\bibitem[Kakade et~al.(2011)Kakade, Kanade, Shamir, and
  Kalai]{kakade2011efficient}
Sham~M Kakade, Varun Kanade, Ohad Shamir, and Adam Kalai.
\newblock Efficient learning of generalized linear and single index models with
  isotonic regression.
\newblock \emph{Advances in Neural Information Processing Systems}, 24, 2011.

\bibitem[Kalai \& Sastry(2009)Kalai and Sastry]{kalai2009isotron}
Adam~Tauman Kalai and Ravi Sastry.
\newblock The isotron algorithm: High-dimensional isotonic regression.
\newblock In \emph{COLT}, 2009.

\bibitem[Langmann(2005)]{langmann2005method}
Edwin Langmann.
\newblock A method to derive explicit formulas for an elliptic generalization
  of the jack polynomials.
\newblock \emph{arXiv preprint math-ph/0511015}, 2005.

\bibitem[Lee et~al.(2019)Lee, Lee, Kim, Kosiorek, Choi, and Teh]{lee2019set}
Juho Lee, Yoonho Lee, Jungtaek Kim, Adam Kosiorek, Seungjin Choi, and Yee~Whye
  Teh.
\newblock Set transformer: A framework for attention-based
  permutation-invariant neural networks.
\newblock In \emph{International Conference on Machine Learning}, pp.\
  3744--3753. PMLR, 2019.

\bibitem[Macdonald(1998)]{macdonald1998symmetric}
Ian~Grant Macdonald.
\newblock \emph{Symmetric functions and Hall polynomials}.
\newblock Oxford university press, 1998.

\bibitem[Mahankali et~al.(2023)Mahankali, Haochen, Dong, Glasgow, and
  Ma]{mahankali2023beyond}
Arvind Mahankali, Jeff~Z Haochen, Kefan Dong, Margalit Glasgow, and Tengyu Ma.
\newblock Beyond ntk with vanilla gradient descent: A mean-field analysis of
  neural networks with polynomial width, samples, and time.
\newblock \emph{arXiv preprint arXiv:2306.16361}, 2023.

\bibitem[O'Donnell(2021)]{o2021analysis}
Ryan O'Donnell.
\newblock Analysis of boolean functions.
\newblock \emph{arXiv preprint arXiv:2105.10386}, 2021.

\bibitem[Pope(2008)]{pope2008algorithms}
Stephen~B Pope.
\newblock Algorithms for ellipsoids.
\newblock \emph{Cornell University Report No. FDA}, pp.\  08--01, 2008.

\bibitem[Qi et~al.(2017)Qi, Su, Mo, and Guibas]{qi2017pointnet}
Charles~R Qi, Hao Su, Kaichun Mo, and Leonidas~J Guibas.
\newblock Pointnet: Deep learning on point sets for 3d classification and
  segmentation.
\newblock In \emph{Proceedings of the IEEE conference on computer vision and
  pattern recognition}, pp.\  652--660, 2017.

\bibitem[Radchenko(2015)]{radchenko2015high}
Peter Radchenko.
\newblock High dimensional single index models.
\newblock \emph{Journal of Multivariate Analysis}, 139:\penalty0 266--282,
  2015.

\bibitem[Santoro et~al.(2017)Santoro, Raposo, Barrett, Malinowski, Pascanu,
  Battaglia, and Lillicrap]{santoro2017simple}
Adam Santoro, David Raposo, David~G Barrett, Mateusz Malinowski, Razvan
  Pascanu, Peter Battaglia, and Timothy Lillicrap.
\newblock A simple neural network module for relational reasoning.
\newblock \emph{Advances in neural information processing systems}, 30, 2017.

\bibitem[Shalev-Shwartz et~al.(2010)Shalev-Shwartz, Shamir, and
  Sridharan]{shalev2010learning}
Shai Shalev-Shwartz, Ohad Shamir, and Karthik Sridharan.
\newblock Learning kernel-based halfspaces with the zero-one loss.
\newblock \emph{arXiv preprint arXiv:1005.3681}, 2010.

\bibitem[Vershynin(2018)]{vershynin2018high}
Roman Vershynin.
\newblock \emph{High-dimensional probability: An introduction with applications
  in data science}, volume~47.
\newblock Cambridge university press, 2018.

\bibitem[Yehudai \& Ohad(2020)Yehudai and Ohad]{yehudai2020learning}
Gilad Yehudai and Shamir Ohad.
\newblock Learning a single neuron with gradient methods.
\newblock In \emph{Conference on Learning Theory}, pp.\  3756--3786. PMLR,
  2020.

\bibitem[Zaheer et~al.(2017)Zaheer, Kottur, Ravanbakhsh, Poczos, Salakhutdinov,
  and Smola]{zaheer2017deep}
Manzil Zaheer, Satwik Kottur, Siamak Ravanbakhsh, Barnabas Poczos, Russ~R
  Salakhutdinov, and Alexander~J Smola.
\newblock Deep sets.
\newblock \emph{Advances in neural information processing systems}, 30, 2017.

\bibitem[Zweig \& Bruna(2022)Zweig and Bruna]{zweig2022exponential}
Aaron Zweig and Joan Bruna.
\newblock Exponential separations in symmetric neural networks.
\newblock \emph{Advances in Neural Information Processing Systems},
  35:\penalty0 33134--33145, 2022.

\end{thebibliography}
\bibliographystyle{iclr2023_conference}

\newpage
\appendix

\section{Proof of Proposition~\ref{prop:A_matrix}}

The proposition is true for any activation with a Laurent series, but we will only prove it for activations satisfying Assumption~\ref{ass:act} since that's the only setting we'll require it.

Consider an analytic activation $\sigma$ with no constant term, given as

\begin{align}
    \sigma(z) = \sum_{k=1}^\infty c_k z^k
\end{align}

And remind the deepsets features map

\begin{align}
\Phi_m(x) &= \sum_{n=1}^N \sigma(a_{m} x_n) \\
\end{align}

where we have neurons without bias terms.

Then for a weight $w \in \mathbb{C}^M$, one can quickly see

\begin{align}
    \langle w, \Phi(x) \rangle & = \sum_{m=1}^M w_m \sum_{n=1}^N \sigma(a_m x_n)  \\
    & = \sum_{m=1}^M w_m \sum_{n=1}^N \sum_{k=1}^\infty c_k (a_m x_n)^k \\
    & = \sum_{k=1}^\infty \sum_{m=1}^M w_m c_k a_m^k \sqrt{k} p_k(x) \\
    & = \langle Aw, p(x) \rangle
\end{align}

where $A_{km} = c_k \sqrt{k} a_m^k $

\section{Proof of Proposition~\ref{prop:hermite}}

We require some definitions to use the machinery of symmetric polynomials.

\begin{definition}
    An \emph{integer partition} $\lambda$ is non-increasing, finite sequence of positive integers $\lambda_1 \geq \lambda_2 \geq \dots \geq \lambda_k$.  The weight of the partition is given by $|\lambda| = \sum_{i=1}^k \lambda_i$.  The length of a partition $l(\lambda)$ is the number of terms in the sequence.
\end{definition}

Then we characterize a product of powersums by:
\begin{align}
    p_\lambda(x) = \prod_i p_{\lambda_i}(x)
\end{align}

Finally, define the combinatorial constant $t_\lambda = \prod_{i=1}^{|\lambda|} (m_i)!$ where $m_i$ denotes the number of parts of $\lambda$ equal to $i$.

\begin{theorem}[{\cite[Chapter VI (9.10)]{macdonald1998symmetric}} ]\label{thm:hall-inner-product}
    For partitions $\lambda, \mu$ with $|\lambda| \leq N$:
    \begin{align}
        \langle p_\lambda, p_\mu \rangle_\V = t_\lambda \ind_{\lambda = \mu}
    \end{align}
\end{theorem}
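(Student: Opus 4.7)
The plan is to invoke the classical connection between the squared Vandermonde measure on $(\S)^N$ and Haar measure on the unitary group $U(N)$, and then reduce to the standard statement that power-sums form an orthogonal basis for the Hall inner product, with the constraint $|\lambda|\leq N$ being exactly what ensures that the truncation to $N$ variables does not destroy linear independence.

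First, by the Weyl integration formula, for any symmetric polynomial $f$,
\begin{equation*}
\EE_{x\sim \V}[f(x)] \;=\; \int_{U(N)} f(\text{eigenvalues of }U)\, dU,
\end{equation*}
so $\langle p_\lambda, p_\mu\rangle_\V$ equals the $L^2(U(N),\text{Haar})$ inner product of the corresponding class functions $U\mapsto p_\lambda(\text{spec}(U))$ and $U\mapsto p_\mu(\text{spec}(U))$. This is where the unit-circle assumption comes in: the complex conjugate of $p_k$ evaluated on $\S^N$ is $p_k$ applied to the inverse eigenvalues, which matches the Hermitian adjoint used in representation theory of $U(N)$.

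Second, I would use the Frobenius character formula, expanding the unnormalized power sums $\tilde p_\lambda = \prod_i \sum_n x_n^{\lambda_i}$ in the basis of Schur polynomials:
\begin{equation*}
\tilde p_\lambda \;=\; \sum_{\nu\,\vdash\, |\lambda|} \chi^\nu_\lambda\, s_\nu,
\end{equation*}
where $\chi^\nu_\lambda$ are irreducible characters of $S_{|\lambda|}$. The Schur polynomials $s_\nu$ in $N$ variables are precisely the irreducible characters of $U(N)$, and are orthonormal with respect to Haar measure, i.e. $\langle s_\nu, s_{\nu'}\rangle_\V = \delta_{\nu\nu'}$, provided $\ell(\nu), \ell(\nu') \leq N$; the hypothesis $|\lambda|\leq N$ guarantees this for every $\nu\vdash|\lambda|$ appearing in the expansion, so no term is silently killed by the $N$-variable truncation. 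Combining this with the column-orthogonality of $S_n$ characters,
\begin{equation*}
\sum_\nu \chi^\nu_\lambda \overline{\chi^\nu_\mu} \;=\; z_\lambda\, \ind_{\lambda=\mu},\qquad z_\lambda = \prod_i i^{m_i} m_i!,
\end{equation*}
yields $\langle \tilde p_\lambda, \tilde p_\mu\rangle_\V = z_\lambda\, \ind_{\lambda = \mu}$.

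Finally, I would convert from the unnormalized to the normalized power sums. Since $p_\lambda = \prod_i \lambda_i^{-1/2}\,\tilde p_{\lambda_i}$, the scaling factor is $\prod_i \lambda_i^{-1} = \prod_i i^{-m_i}$, which exactly cancels the $\prod_i i^{m_i}$ portion of $z_\lambda$ and leaves the constant $\prod_i m_i! = t_\lambda$. This gives the stated identity. The main obstacle is essentially bookkeeping: verifying that the Hermitian adjoint implicit in $\langle \cdot,\cdot\rangle_\V$ matches the $U(N)$ inner product (so that taking conjugates of eigenvalues on $\S$ corresponds to inverting them), and tracking the normalization carefully; otherwise the result reduces directly to standard facts from the character theory of $U(N)$ and $S_n$ as laid out in Macdonald.
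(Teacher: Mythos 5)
The paper does not prove this statement at all -- it is imported verbatim from Macdonald (Ch.\ VI, (9.10)) -- so there is no internal proof to compare against; what you have written is a correct, self-contained derivation of the cited fact, and it is the standard one (essentially Diaconis--Shahshahani): Weyl integration to identify $\langle\cdot,\cdot\rangle_\V$ with the $L^2$ inner product of class functions on $U(N)$, the Frobenius expansion $\tilde p_\lambda=\sum_{\nu\vdash|\lambda|}\chi^\nu_\lambda s_\nu$, orthonormality of the Schur characters $s_\nu$ with $\ell(\nu)\le N$, column orthogonality of $S_n$ characters, and then the $\prod_i i^{-m_i}$ rescaling that turns $z_\lambda$ into $t_\lambda=\prod_i m_i!$. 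Your key observation that $|\lambda|\le N$ forces $\ell(\nu)\le N$ for every $\nu\vdash|\lambda|$, so no Schur term on the $\lambda$ side is annihilated by the truncation to $N$ variables, is exactly the right use of the hypothesis. One small point worth making explicit, since the theorem only constrains $\lambda$: when $|\mu|\neq|\lambda|$ the two expansions involve characters of polynomial irreps of different degrees, which are orthogonal, and any $s_{\nu'}$ with $\ell(\nu')>N$ appearing formally in the $\mu$-expansion is identically zero in $N$ variables -- so the asymmetric hypothesis causes no loss, which is precisely the one-sidedness the paper later exploits in Proposition~\ref{prop:hermite}. With that remark added, your argument is complete and consistent with the normalization conventions of the paper.
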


With that in mind, let's consider the inner product of two simple single-index functions.

Let $p = [p_1, p_2, \dots]$ be an infinite vector of powersums, and choose exponents $i, j$ with $i \leq \sqrt{N}$.  Then for any $h, \tilde{h} \in \mathbb{C}^{\infty}$ such that $h$ is only supported on the first $\sqrt{N}$ entries:

\begin{align}
    \langle \langle h, p \rangle^i, \langle \tilde{h}, p\rangle^j \rangle_\V &= \left\langle \sum_{|\alpha| = i} \binom{i}{\alpha}h^\alpha \overline{p^\alpha}, \sum_{|\alpha| = j} \binom{j}{\alpha} \tilde{h}^\alpha \overline{p^\alpha}, \right\rangle_\V \\
    & = \delta_{ij} \sum_{|\alpha| = i} \binom{i}{\alpha}^2 \langle p^\alpha, p^\alpha \rangle_\V h^\alpha\overline{\tilde{h}^\alpha}\\
    &= \delta_{ij} \sum_{|\alpha| = i} \binom{i}{\alpha}^2 \left(\prod_{k=1}^{\sqrt{N}} \alpha_k ! \right) h^\alpha\overline{\tilde{h}^\alpha} \\
    &= \delta_{ij} i! \sum_{|\alpha| = i} \binom{i}{\alpha} h^\alpha\overline{\tilde{h}^\alpha} \\
    &= \delta_{ij} i! \langle h, \tilde{h} \rangle^i
\end{align}

\section{Proof of Propsition~\ref{prop:silly_loss}}

Applying Proposition~\ref{prop:hermite} and using assumptions on the degree bound on $f$ and the support of $h^*$, we can write:

\begin{align}
    \left\langle f(\langle h^*, p\rangle), f(\langle Aw, p \rangle) \right\rangle_V &= \left\langle \sum_{j=1}^{\sqrt{N}} \frac{\alpha_j}{\sqrt{j!}} \langle h^*, p\rangle^j, \sum_{j=1}^{\sqrt{N}}\frac{\alpha_j}{\sqrt{j!}} \langle Aw, p \rangle^j \right\rangle_V\\
    &= \sum_{j=1}^{\sqrt{N}} |\alpha_j|^2 \langle h^*, Aw \rangle^j
\end{align}

Hence we have

\begin{align}
    \hat{L}(w) &= E_{x \sim V}\left[-\Re{f(\langle h^*, p(x) \rangle\overline{f(\langle Aw, p(x) \rangle)}}\right] + \sum_{j=1}^{\sqrt{N}} \frac{|\alpha_j|^2}{2} \|Aw\|^{2j} \\
    &= -\frac{1}{2}\left\langle f(\langle h^*, p\rangle), f(\langle Aw, p \rangle) \right\rangle_V -\frac{1}{2}\overline{\left\langle f(\langle h^*, p\rangle), f(\langle Aw, p \rangle) \right\rangle_V} + \sum_{j=1}^{\sqrt{N}} \frac{|\alpha_j|^2}{2} \|Aw\|^{2j}\\
    &= -\sum_{j=1}^{\sqrt{N}} |\alpha_j|^2 \Re{\langle h^*, Aw \rangle^j} + \frac{|\alpha_j|^2}{2} \|Aw\|^{2j}
\end{align}

Now, we use the same notation as in Theorem~\ref{thm:dynamic} and introduce variables $m = \langle Aw, h^*\rangle = re^{i\theta}$ and $v = \|Aw\|^2 - r^2$, such that we can write:

\begin{align}
    \hat{L}(w) = \sum_{j=1}^{\sqrt{N}} |\alpha_j|^2 \left(-r^j \cos j \theta + \frac{1}{2} (v +r^2)^{j}\right)
\end{align}

Because $r \geq 0$ and $v \geq 0$, this loss can be minimized by setting $v = 0$ and any $\theta$ where $\cos j \theta = 1$ for all $j$ with $\alpha_j \neq 0$.  Since we assume there are distinct indices $i, j$ that are coprime with non-zero support, we require $i \theta$ and $j \theta$ to both be multiples of $2\pi$, which is only possible if $\theta \equiv 0 \mod 2 \pi$.  Therefore:

\begin{align}
    \hat{L}(w) &= \sum_{j=1}^{\sqrt{N}} |\alpha_j|^2 \left(-r^j + \frac{1}{2} r^{2j}\right) \\
    &= C +\sum_{j=1}^{\sqrt{N}} \frac{|\alpha_j|^2}{2} \left(r^j - 1\right)^2
\end{align}

for some constant $C$, and this is minimized at $r = 1$.  Hence, if $r = 1, \theta \equiv 0 \mod 2\pi, v = 0$, it follows that $Aw = h^*$.

\section{Proof of Theorem~\ref{thm:dynamic}}\label{proof:dynamic}

Given the matrix $A$ and weight $w$, an identical calculation to the one in Proposition~\ref{prop:silly_loss} shows that
\begin{align}
    L(w) &= E_{x \sim V}\left[-\Re{f(\langle h^*, p(x) \rangle\overline{g(\langle Aw, p(x) \rangle)}}\right] + \frac{|\alpha_s|}{2} \|Aw\|^{2s} \\
    &= -|\alpha_s|\Re{\langle Aw, h^* \rangle^s} + \frac{|\alpha_s|}{2} \|Aw\|^{2s}
\end{align}

To calculate the gradient with respect to the real and imaginary parts of $w$, we use tools from Wirtinger calculus \citep{fischer2005precoding}.  Using the notation that $\nabla_{\overline{w}} = \frac{1}{2}(\nabla_{w_R} + i \nabla_{w_C})$ and the appropriate generalization of the chain rule, we have:

\begin{align}
    2\nabla_{\overline{w}} \Re{\langle Aw, h^* \rangle^s} &= \nabla_{\overline{w}} \left(\langle Aw, h^* \rangle^s + \overline{\langle Aw, h^* \rangle^s}\right) \\
    &= \nabla_{\overline{w}} \overline{\langle Aw, h^* \rangle}^s \\
    &= s \overline{\langle Aw, h^* \rangle}^{s-1} A^\dag h^*
\end{align}

Likewise,

\begin{align}
    2\nabla_{\overline{w}} \|Aw\|^{2s} &= 2s \|Aw\|^{2(s-1)} \nabla_{\overline{w}} \|Aw\|^2 \\
    &= 2s \|Aw\|^{2(s-1)} \nabla_{\overline{w}} \left(w^\dag A^\dag A w\right) \\
    &= 2s \|Aw\|^{2(s-1)} A^\dag A w
\end{align}

Thus, we have:

\begin{align}
    \nabla L &= \nabla_{w_R} L + i \nabla_{w_C} L \\
    &= 2\nabla_{\overline{w}} L \\
    &= -s|\alpha_s| \overline{\langle Aw, h^* \rangle}^{s-1}A^\dag h^* + s|\alpha_s| \|Aw\|^{2(s-1)} A^\dag Aw
\end{align}

We introduce the parameters

\begin{align}
    m &= \langle Aw, h^* \rangle = \langle w, A^\dag  h^*\rangle \\
    v &= \|P_{h^*}^\perp A w \|^2 = \|Aw\|^2 - |m|^2
\end{align}

And we consider preconditioned gradient flow of the form (where for complex variables we use similar notation that $\dot{w} = \dot{w_R} + \dot{w_C}i$):

\begin{align}
    \dot{w} &= -\frac{1}{s|\alpha_s|} (A^\dag A)^{-1} \nabla L \\
    &= \overline{m}^{s-1}(A^\dag A)^{-1}A^\dag h^* - \|Aw\|^{2(s-1)} w
\end{align}

It follows that

\begin{align}
    \dot{m} &= \langle \dot{w}, A^\dag  h^* \rangle\\
    &= \|P_A h^*\|^2 \overline{m}^{s-1} - (v + |m|^2)^{s-1} m
\end{align}

where $P_A = A(A^\dag A)^{-1}A^\dag$ is the orthogonal projection onto the range of $A$.

Let $m = a + bi = re^{i\theta}$, so we have $\dot{m} = \dot{a} + \dot{b}i$.  Thus

\begin{align}
    \dot{a} &= \|P_A h^*\|^2 r^{s-1} \cos (s-1)\theta - (v + r^2)^{s-1} r \cos \theta\\
    \dot{b} &= -\|P_A h^*\|^2 r^{s-1} \sin (s-1)\theta - (v + r^2)^{s-1} r \sin \theta
\end{align}

Now we do a change of variables, because $a = r \cos \theta$ and $b = r \sin \theta$, so 

\begin{align}
    \dot{a} &= \dot{r} \cos \theta - r \dot{\theta} \sin \theta \\
    \dot{b} &= \dot{r} \sin \theta + r \dot{\theta} \cos \theta \\
\end{align}

Rearranging, we can get the flow on $r$ and $\theta$:

\begin{align}
    \dot{r} &= \dot{a} \cos \theta + \dot{b} \sin \theta \\
    &= \|P_A h^* \|^2 r^{s-1} \cos s \theta -(v+r^2)^{s-1} r\\
    r \dot{\theta} &= -\dot{a} \sin \theta + \dot{b} \cos \theta \\
    &= - \|P_A h^* \|^2 r^{s-1} \sin s \theta\\
\end{align}

We can instead control the flow on $\cos s \theta$:

\begin{align}
    \frac{d}{dt} \cos s \theta = -\dot{\theta} s \sin s \theta = \|P_A h^*\|^2 sr^{s-2} \sin^2 s\theta
\end{align}

and calculate the flow on $v$:
\begin{align}
    \dot{v} &= 2 \Re{\langle A \dot{w}, Aw \rangle} - 2r\dot{r}\\
    &= 2 \left(r^s \cos s\theta - (v + r^2)^{s} - \|P_Ah^*\|^2 r^s \cos s \theta + (v+r^2)^{s-1}r^2 \right)\\
    &= 2 (1 - \|P_Ah^*\|^2) r^s \cos s\theta - 2 (v + r^2)^{s-1}v
\end{align}

Finally, introducing the notation $\delta = 1 - \|P_A h^*\|^2$, we have

\begin{align}
    \dot{r} &= (1-\delta) r^{s-1} \cos s \theta -(v+r^2)^{s-1} r\\
    \frac{d}{dt} \cos s \theta &= (1-\delta) sr^{s-2} (1 - \cos^2 s\theta)\\
    \dot{v} &= 2 \delta r^s \cos s\theta - 2 (v + r^2)^{s-1}v
\end{align}

\section{Proof of Theorem~\ref{thm:learning}}

We will use the following facts repeatedly in the below arguments.

First, because $\dot{r} \geq 0$ when $r = 0$, and $\dot{r} \leq 0 $ when $r = 1$, it follows that $r$ can never leave the range $[0,1]$.  Furthermore, note that $\cos s \theta$ is always non-decreasing.

\subsection{Case $s = 1$}

In the setting with information complexity equal to $1$, we immediately have the following identities:

\begin{align}
    \dot{r} &= (1-\delta)\cos \theta - r\\
    \frac{d}{dt} \cos \theta &\geq (1-\delta) (1 - \cos^2 \theta)\\
    \dot{v} &\leq 2 \delta - 2v
\end{align}

Let us address $v$ first.  From our assumptions, $\delta < \epsilon$, and so when $v \geq \epsilon$, $\dot{v}$ is negative.  It follows that a trajectory that begins below $\epsilon$ cannot ever exceed $\epsilon$.  In other words, if $v_0 \leq \epsilon$, $v$ can never exceed $\epsilon$ and we've achieved optimality. 

Otherwise, supposing $v_0 > \epsilon$, consider values of $t$ where $v_t > \delta$ so that the RHS of the inequality of $\dot{v}$ is strictly negative and we may write:

\begin{align}
    \frac{\dot{v}}{\delta - v} \geq 2
\end{align}

Integrating from $0$ to $t$ gives that

\begin{align}
    - \log |\delta - v_t| - (- \log |\delta - v_0|) \geq 2t
\end{align}

which yields the bound

\begin{align}
    v_t \leq \delta + (v_0 - \delta)e^{-2t} \leq \delta + e^{-2t}
\end{align}

By Lemma~\ref{lem:theta_dynamics},

\begin{align}
    \cos \theta_t \geq \tanh((1-\delta)t)
\end{align}

Finally, we consider $r$.


Choose $T_1 = \inf\{t \geq 0: v_t \leq \epsilon, \cos \theta_t \geq \frac{1-\epsilon/2}{1-\delta}\}$, and $T_2 = \inf\{t \geq T_1: r_t \geq 1-\epsilon\}$.  Note that one can easily confirm that $T_1 \leq O\left(\log \frac{1}{\epsilon} \right)$

Then for all $t \in [T_1, T_2)$, we have

\begin{align}
    \dot{r}_{t} = (1-\delta) \cos \theta_{t} - r_{t} \geq 1 - \epsilon/2 - r_t
\end{align}

and the RHS is always non-negative.

Dividing by the RHS and integrating from $T_1$ to $t$ gives

\begin{align}
    - \log (1 - \epsilon/2 - r_{t}) + \log (1 - \epsilon/2- r_{T_1}) \geq t - T_1
\end{align}

Rearranging gives

\begin{align}
    r_t \geq 1 - \epsilon/2 - (1 - \epsilon/2 - r_{T_1}) e^{T_1 - t}
\end{align}

Note that by definition of $T_2$, it follows that

\begin{align}
    1 - r_t \leq \epsilon/2 + e^{T_1 - t}
\end{align}

So it follows that $T_2 \leq T_1 + \log \frac{2}{\epsilon}$.

Altogether, the total time to achieve $\epsilon$ optimality for all three variables is $O\left(\log \frac{1}{\epsilon} \right)$.

\subsection{Case $s > 1$}

In this case, because we cannot straightforwardly solve or bound the system of ODEs, we need to control rates in stages.  We have a stopping time for one variable at a time, and use local monotonicity to ensure bounds on the remaining variables.

\paragraph{First Phase}

In the first stage, we consider the duration of time $T_1 = \inf\{t \geq 0: v_t \leq v^*\}$ where $v^* := 2^{-s} 6^{-2} s^{-2} r_0^4$, and bound the behavior of each variable.  Below, we will consider $t \in [0, T_1]$.

To control the behavior or $r$, we consider the following manipulations:
\begin{align}
    \frac{d}{dt} \log r^2 &= 2 (1-\delta) r^{s-2} \cos s \theta - 2(v+r^2)^{s-1} \\
    \frac{d}{dt} \log v &= 2\delta \frac{r^s \cos s\theta}{v} - 2(v+r^2)^{s-1}
\end{align}

This implies

\begin{align}
    \frac{d}{dt} \log \frac{r^2}{v} = 2 r^{s-2} \cos s\theta \left(1 - \delta - \delta \frac{r^2}{v} \right)
\end{align}

By definition, in this range of $t$ we have $v_t > \frac{\delta}{1-\delta}$, so it follows that the RHS of this equation is always positive.  Hence it follows that $\log \frac{r^2}{v}$ is increasing, and by monotonicity of $\log$, we have

\begin{align}
    \frac{r^2}{v} \geq \frac{r_0^2}{v_0} \geq r_0^2
\end{align}

This implies that

\begin{align}
    \dot{r} &= (1-\delta) r^{s-1} \cos s \theta -(v+r^2)^{s-1} r\\
    &\geq (1-\delta) r^{s-1} \cos s \theta - \left(\frac{r^2}{r_0^2} + r^2\right)^{s-1}r \\
    & \geq r^{s-1} \left((1-\delta) \cos s \theta - \left(\frac{1}{r_0^2} + 1 \right)^{s-1}r^s \right)
\end{align}

Suppose it is true that $r \leq \frac{1}{6} r_0^2$, then it follows that:

\begin{align}
    r &\leq \frac{r_0^2 (1-\delta) \cos s\theta_0}{2}\\
    &\leq \frac{r_0^2 (1-\delta) \cos s\theta_0}{r_0^2 + 1}\\
    &= \frac{(1-\delta) \cos s\theta_0}{\frac{1}{r_0^2} + 1}\\
    &\leq \frac{\left((1-\delta) \cos s\theta\right)^{1/s}}{\left(\frac{1}{r_0^2} + 1\right)^{\frac{s-1}{s}}}
\end{align}

So it follows that $\dot{r}$ will be positive whenever $r \leq \frac{1}{6} r_0^2$.  We have $r_0 \geq \frac{1}{6} r_0^2$, it follows that $r_t \geq \frac{1}{6} r_0^2$ for $t \leq T_1$.

Finally we can control $v$ by observing that, for $t \in [0, T_1]$, $v \geq v^* \geq (2\delta)^{1/s}$.  Hence,

\begin{align}
    \dot{v} \leq 2 \delta - 2v^s \leq -v^s
\end{align}

which implies

\begin{align}
    -\frac{\dot{v}}{v^s} \geq 1
\end{align}

And integrating from 0 to $t \leq T_1$ gives

\begin{align}
    v_t^{-(s-1)} \geq \frac{1}{s-1} v_t^{-(s-1)} - \frac{1}{s-1} v_0^{-(s-1)} \geq t
\end{align}

Rearranging gives

\begin{align}
    v_t \leq t^{-\frac{1}{s-1}}
\end{align}

This gives a bound on $T_1 \leq (v^*)^{-(s-1)} = O(2^{s^2} r_0^{-4s})$

Lastly by monotonicity we have $\cos s \theta_{T_1} \geq \cos s \theta_0$.

So to summarize:

\begin{align}
    r_{T_1} &\geq \frac{1}{6} r_0^2\\
    \cos s \theta_{T_1} &\geq \cos s \theta_0\\
    v_{T_1} &\leq v^*
\end{align}

Furthermore, we've actually proven that $v_t \leq v^*$ for all $t \geq T_1$, which we will use in subsequent phases.

\paragraph{Second Phase}

We define $T_2 = \inf\{t \geq T_1: r_t \geq 1/5\}$.  As before, if $r_{T_1} \geq 1/5$ then $T_2 = 0$ and we can skip to the next phase, so we assume otherwise.

Using the identity $(1+x)^k \leq 1 + 2^kx$ which holds for any $x \in [0,1]$ and $k\geq 1$, observe that the ODE governing $r$ can now be bounded as:

\begin{align}
    \dot{r} &= (1-\delta) \cos s \theta r^{s-1} - (v + r^2)^{s-1} r \\
    &\geq (1-\delta) \cos s \theta_0 r^{s-1} - \left( \frac{v}{r^2} + 1 \right)^{s-1} r^{2s-1} \\
    &\geq (1-\delta) \cos s \theta_0 r^{s-1} - \left(1 + 2^{s-1} \frac{v}{r^2} \right) r^{2s-1}\\
    &\geq \frac{1-\delta}{2} r^{s-1} - \left(1 + \frac{r_0^4}{2s^2(6r)^2} \right) r^{2s-1}
\end{align}

where in the last step we use that $v \leq v^*$ and plug in the definition of $v^*$ and the bound $\cos s \theta_0 \geq 1/2$.

Consider any $t$ when $r = \frac{1}{6}r_0^2$, and observe that the above inequality implies $\dot{r} > 0$.  Because $r_{T_1} \geq \frac{1}{6}r_0^2$, this implies we will always have $r \geq \frac{1}{6}r_0^2$ for larger values of $t$, and we may bound:

\begin{align}
    \dot{r} &\geq \frac{1-\delta}{2} r^{s-1} - \left(1 + \frac{1}{2s^2} \right) r^{2s-1}
\end{align}

Hence, we can apply Lemma~\ref{lem:r_dynamics} with $a = (1-\delta)/2$, $b = 1 + \frac{1}{2s^2}$, where $k^2 = (a/b)^2 \geq 1/5$, and using the initialization of $r_{T_1}$.  This grants the bound that $T_2 \leq T_1 + O(s^4 r_{T_1}^{-s+1}) = T_1 + O(6^s r_0^{-2s+2})$.

Therefore the new summary is:

\begin{align}
    r_{T_2} &\geq 1/5\\
    \cos s \theta_{T_2} &\geq \cos s \theta_0\\
    v_{T_2} &\leq v^*
\end{align}

\paragraph{Third Phase}

We define $T_3 = \inf\{t \geq T_2: \cos s \theta_t \geq \frac{1 - \frac{1}{4s^4}}{1-\delta} \}$

First of all, note that the bound on $r$ derived in the last phase required lower bounding $\cos s \theta$ by $\cos s \theta_0$.  Since $\cos s \theta$ is non-decreasing, that bound is still true by an identical argument.

So we can bound the ODE for $\theta$:

\begin{align}
    \frac{d}{dt} \cos s \theta &= (1-\delta) sr^{s-2} (1 - \cos^2 s\theta)\\
    &\geq (1-\delta) s(1/5)^{s-2} (1 - \cos^2 s\theta)
\end{align}

Note that by lemma~\ref{lem:theta_dynamics} with $k = (1-\delta) s(1/5)^{s-2}$, we have

\begin{align}
    T_3 \leq T_2 + O(5^s \log s)
\end{align}

The bound $v \leq v^*$ continues to hold.
In summary, we now have:
\begin{align}
    r_{T_3} &\geq 1/5\\
    \cos s \theta_{T_3} &\geq \frac{1 - \frac{1}{4s^4}}{1-\delta}\\
    v_{T_3} &\leq v^*
\end{align}

\paragraph{Fourth Phase}

We define $T_4 = \inf\{t \geq T_3: r_t \geq r^*\}$ where $r^* := 1 - \frac{1}{s^2}$.  Again, consider the non-trivial case where $T_4 \neq 0$.

Because the bound on $v$ is the same, and the bound on $\cos s \theta$ is better than before, we can now bound the ODE of $r$ similarly to the second phase:

\begin{align}
    \dot{r} &\geq \left(1 - \frac{1}{4s^4}\right)  r^{s-1} - \left(1 + \frac{1}{2s^2} \right) r^{2s-1}
\end{align}

Applying Lemma~\ref{lem:r_dynamics} with $k = \frac{1 - \frac{1}{4s^4}}{1 + \frac{1}{2s^2}} = 1 - \frac{1}{2s^2}$, we have:

\begin{align}
    T = \inf \{t \geq T_3: r \geq k^2\} \leq T_3 + O(5^s \log s)
\end{align}

Finally, note that $k^2 = \left(1 - \frac{1}{2s^2}\right)^2 \geq 1 - \frac{1}{s^2}$, which implies that $T_4 \leq T$.

Thus we have:

\begin{align}
    r_{T_4} &\geq r^*\\
    \cos s \theta_{T_4} &\geq \frac{1 - \frac{1}{4s^4}}{1-\delta}\\
    v_{T_4} &\leq v^*
\end{align}

\paragraph{Fifth Phase}

We define $T_5 = \inf\{t \geq T_4: \cos s \theta_t \geq \frac{1-\epsilon/2}{1-\delta}, v_t \leq v^\dag\}$ where $v^\dag = 2^{-s}(\epsilon/2)(r^*)^2$.

Again, since $\cos s \theta$ is increasing and $v$ is always less than $v^*$, the bound on $r \geq r^*$ established in the last step will stay true.

Thus, by the identity $r^k \geq (r^*)^k = \left(1 - \frac{1}{s^2} \right)^k \geq 1 - \frac{k}{s^2}$ we have the ODE inequalities:

\begin{align}
    \frac{d}{dt} \cos s \theta &= (1-\delta) sr^{s-2} (1 - \cos^2 s\theta)\\
    &\geq (1-\delta) s \left(1 - \frac{1}{s} \right)(1 - \cos^2 s \theta)\\
    \dot{v} &= 2\delta r^s \cos s \theta - 2 (v + r^2)^{s-1}v\\
    &\leq 2 \delta - 2 \left(1 - \frac{2(s-1)}{s^2} \right)v
\end{align}

It is easy to see that we'll have the bound

\begin{align}
    T_5 \leq T_4 + O\left(\log \frac{1}{\epsilon}\right)
\end{align}

and in summary

\begin{align}
    r_{T_5} &\geq r^*\\
    \cos s \theta_{T_5} &\geq \frac{1-\epsilon/2}{1-\delta}\\
    v_{T_5} &\leq v^\dag
\end{align}

\paragraph{Sixth Phase}

We define $T_6 = \inf\{t \geq T_5: r_t \geq 1-\epsilon\}$, and assume the non-trivial setting where $T_6 \neq 0$.

Note that $\dot{v}$ is negative when $v = v^\dag$, so the bound $v \leq v^\dag$ remains true for $t \geq T_5$.  Thus, we can control the ODE of $r$ one more time:

\begin{align}
    \dot{r} &= (1-\delta) r^{s-1} \cos s \theta -(v+r^2)^{s-1} r\\
    &\geq (1-\delta) r^{s-1} (1-\epsilon/2) -\left(1 + \frac{v}{r^2}\right)^{s-1} r\\
    &\geq (1-\epsilon/2) r^{s-1} - \left(1 + 2^s \frac{v^\dag}{r^2} \right) r^{2s-1}\\
    &\geq (1-\epsilon/2) r^{s-1} - \left(1 + \epsilon/2 \frac{(r^*)^2}{r^2} \right) r^{2s-1}
\end{align}

One can confirm that when $r = r^*$, the RHS of the above inequality is positive, so $\dot{r} \geq 0$.  Thus, since $r_{T_5} \geq r^*$, it will always be the case that $r \geq r^*$ for $t \geq T_5$, so as before we bound:

\begin{align}
    \dot{r} &\geq (1-\epsilon/2) r^{s-1} - (1+\epsilon/2) r^{2s-1}
\end{align}

By Lemma~\ref{lem:r_dynamics}, we have that

\begin{align}
    T_6 \leq T_5 + O\left(\log \frac{1}{\epsilon}\right)
\end{align}

and thus we've achieved $\epsilon$ optimality for all three of our variables.

\section{Proof of Lemma~\ref{lem:projection}}

Remind from Proposition~\ref{prop:A_matrix} that $A \in \mathbb{C}^{\infty \times M}$ is of the form

\begin{align}
    A_{km} = c_k \sqrt{k} a_m^k
\end{align}

where we assume $c_k > 0$, and $a_m \sim \S$.  Note that

\begin{align}
    1 - \|P_A h^*\|^2 &= \|P_A^\perp h^*\|^2 \\
    &= \min_{w} \|A w - h^*\|^2 \\
\end{align}

so we need to choose a candidate value of $w$.

Consider the block decomposition

\begin{align}
    A = \left[\begin{array}{ c }
    B \\
    \hline
    C
  \end{array}\right]
\end{align}

where $B \in \mathbb{C}^{N \times M}$ and $C \in \mathbb{C}^{\infty \times M}$.  Suppose we decompose $h^* = \left[\begin{array}{ c }
    u \\
    \hline
    0
  \end{array}\right]$ where $u \in \mathbb{C}^N$.  Then if we apply the  pseudoinverse and define $w =  B^+ u$, observe:

\begin{align}
    Aw &=
    \left[\begin{array}{ c }
    B \\
    \hline
    C
  \end{array}\right] B^+u\\
  &=     \left[\begin{array}{ c }
    BB^+u \\
    \hline
    CB^+u
  \end{array}\right]
\end{align}

Observe that we can decompose $B = DX$ where $D$ is a diagonal matrix such that $D_{kk} = c_k\sqrt{k}$ and $X_{km} = a_m^k$.  Since $N < M$, one can see $X$ is a rectangular Vandermonde matrix evaluated on $\{a_m\}_{m=1}^M$.  Almost surely, these values are all pairwise distinct, which implies that $X$ has linearly independent rows.  Since $D$ is diagonal with no zeros along the diagonal, $B$ also has linearly independent rows.  This condition implies $BB^+ = I$.  So we have

\begin{align}
    Aw &=     \left[\begin{array}{ c }
    u \\
    \hline
    CB^+u
  \end{array}\right]
\end{align}

Remember $\|u\| = \|h^*\| = 1$, as $u$ is the first $N$ elements of $h^*$ and hence still only supported on the first $\sqrt{N}$ elements.  Because $B^+ = X^+ D^{-1}$, we have:

\begin{align}
    \|CB^+u\| &\leq \|C\| \|X^+\|  \|D^{-1} u\| \\
\end{align}

We can now go about bounding these norms.

Since $u$ is only supported on the first $\sqrt{N}$ elements and $\|u\| = 1$, it follows $\|D^{-1}u\| \leq \max_{1 \leq k \leq \sqrt{N}} \left|\frac{1}{c_k \sqrt{k}}\right| = \frac{1}{\sigma_-}$.

By Lemma~\ref{lem:X_singular}, we have the bound

\begin{align}
\|X^+\| \leq O\left(\frac{1}{\sqrt{M}}\right)
\end{align}

Finally for any $\hat{w} \in \mathbb{C}^M$ with $\|\hat{w}\| = 1$, we have by Cauchy-Schwarz:

\begin{align}
    \|Cw\|^2 &= \sum_{k=N+1}^\infty \left|\sum_{m=1}^M \hat{w}_m c_k \sqrt{k} a_m^k\right|^2 \\
    &\leq \sum_{k=N+1}^\infty \|\hat{w}\|^2 \sum_{m=1}^M \left|c_k \sqrt{k}\right|^2 \\
    &= M \sum_{k=N+1}^\infty k|c_k|^2\\
    &\leq Me^{-O(\sqrt{N})}
\end{align}

where we use in the last step Assumption~\ref{ass:act}.

With these bounds, we clearly have

\begin{align}
    1 - \|P_A h^*\| &\leq \|Aw - h^*\|^2 \\
    &= \left\|\left[\begin{array}{ c }
    u \\
    \hline
    CB^+u
  \end{array}\right] - \left[\begin{array}{ c }
    u \\
    \hline
    0
  \end{array}\right] \right\|^2 \\
    &\leq \|CB^+u\|^2 \\
    &\leq \frac{M}{\sqrt{M}\sigma_-} e^{-O(\sqrt{N})}
\end{align}

Because $M = O(N^3)$, and we've assumed $1/\sigma_-$ is polynomial in $N$, this bound can be written as $e^{-O(\sqrt{N})}$ for possibly different constants in the big $O$ notation.

\section{Proof of Lemma~\ref{lem:init_concentration}}

    Remind that $m_0 = \langle A w_0, h^* \rangle = \frac{1}{\|Aw\|} \langle A w, h^* \rangle$.  Because the complex Gaussian is invariant to multiplication by an unit modulus complex number, it follows that $\theta_0$ is independent of $r_0$ and uniformly distributed on $\S$.  Because $s$ is a positive integer, $s\theta_0$ is also uniformly distributed on $\S$, and hence $P(\cos s \theta_0 \geq 1/2) = 1/3$.  And by our choice of normalization, $v_0 = 1-r_0^2$ automatically.  So it only remains to prove the first statement is true with high probability.

    We remind that $r_0 = \frac{|\langle A w, h^* \rangle|}{\|Aw\|}$.  By Cauchy-Schwartz, it's clear that $r_0 \leq 1$, so only the lower bound is non-trivial.  If we use the same notation to decompose the matrix $A$ as in the proof of Lemma~\ref{lem:projection}, it's clear that 

    \begin{align}
        |\langle A w, h^* \rangle| &= |\langle Bw, u \rangle|\\
        &= |\langle w, B^\dag u \rangle|
    \end{align}

    If we condition on $B$, then by rotation invariance of the Gaussian, note that $|\langle w, B^\dag u \rangle|$ is distributed identically to $|g|\|B^\dag u \|$ where $g$ is sampled from a one dimensional complex Gaussian.

    By the argument in Lemma~\ref{lem:init_concentration}, since $u$ is only supported on the first $\sqrt{N}$ elements, note that:

    \begin{align}
        \|B^\dag u\| &= \|X^\dag D^\dag u\|\\
        &\geq \sigma_N(X) \|D^\dag u\|\\
        &\geq \sigma_N(X) \sigma_- \\
        &\geq \sigma_- O(\sqrt{M})
    \end{align}

    with probability $1 - 2\exp(-O(N))$ by Lemma~\ref{lem:X_singular}

    Lastly, we need to control 

    \begin{align}
        \|Aw\| \leq \|Bw\| + \|Cw\| \leq (\|B\| + \|C\|)\|w\|
    \end{align}

    And we can write again by Lemma~\ref{lem:X_singular}, with similarly high probability:

    \begin{align}
        \|B\| &= \|DX\| \\
        &\leq \|D\| \|X\|\\
        &\leq \sigma_+ \sigma_1(X) \\
        &\leq \sigma_+ O(\sqrt{M})
    \end{align}

    Combining this with the bound on $\|C\|$ we derived in Lemma~\ref{lem:init_concentration}, and the concentration on $\|w\|$ from Lemma~\ref{lem:complex_gaussian} we have with probability $1 - 2\exp(-O(N))$:

    \begin{align}
        \|Aw\| \leq \left(\sigma_+ O(\sqrt{M}) + e^{-O(\sqrt{N})}\right) O(\sqrt{M})
    \end{align}

    Finally we can say that with probability $1 - 2\exp(-O(N))$

    \begin{align}
        r_0 \geq c\frac{\sigma_-}{\sigma_+\sqrt{M}}
    \end{align}

    for some universal constant $c$.

\section{Auxiliary Lemmas}

\subsection{Dynamics Inequality Lemmas}

The following lemmas provide bounds on our dynamics that we can apply multiple times in different phases of the proof.  Both of these lemmas are essentially special cases of the Bihari-LaSalle Inequality~\citep{bihari1956generalization}, but because the proofs are much simplified due to our setting, and for completeness, we include the proofs below.

\begin{lemma}\label{lem:theta_dynamics}
    Consider $\theta$ with the differential inequality

    \begin{align}
    \frac{d}{dt} \cos s \theta &\geq k (1 - \cos^2 s\theta)
    \end{align}

    with $\cos s \theta_0 \geq 1/2$.  Then we have

    \begin{align}
        \cos s \theta_t \geq \tanh(kt)
    \end{align}

    and hence if $T = \inf\{t\geq 0 : \cos s \theta_t \geq c\}$, then $T \leq \frac{1}{2k} \log \frac{2}{1 - c}$.
\end{lemma}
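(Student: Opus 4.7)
The plan is to treat the inequality as a separable ODE on the scalar $u := \cos s\theta$, which satisfies $\dot u \ge k(1-u^2)$ with $u_0 \ge 1/2$. First I would observe that while $u < 1$, the right-hand side is strictly positive, so $u$ is non-decreasing; in the degenerate event that $u_t$ attains $1$ in finite time the conclusion is trivially stronger than $\tanh(kt)$, so I may assume $u_t \in [1/2, 1)$ for all $t$ under consideration. This lets me divide through by $1-u^2$ without sign issues.

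Next I would integrate the comparison inequality $\dfrac{\dot u}{1-u^2} \ge k$ from $0$ to $t$, using the antiderivative $\operatorname{arctanh}$, to get $\operatorname{arctanh}(u_t) - \operatorname{arctanh}(u_0) \ge kt$. Since $u_0 \ge 1/2 > 0$ forces $\operatorname{arctanh}(u_0) \ge 0$, applying the (monotone) $\tanh$ yields $u_t \ge \tanh(kt + \operatorname{arctanh}(u_0)) \ge \tanh(kt)$, which is exactly the first claim.

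For the stopping-time bound, I would invert: $\tanh(kt) \ge c$ holds as soon as $kt \ge \operatorname{arctanh}(c) = \tfrac{1}{2}\log\tfrac{1+c}{1-c}$. Thus the first time $T$ at which $\cos s\theta_T \ge c$ is at most $\tfrac{1}{2k}\log\tfrac{1+c}{1-c}$, and the crude bound $\tfrac{1+c}{1-c} \le \tfrac{2}{1-c}$ (valid for $c \in [0,1)$) gives the stated $T \le \tfrac{1}{2k}\log\tfrac{2}{1-c}$.

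There is no real obstacle here beyond bookkeeping; the argument is a one-dimensional Bihari--LaSalle comparison, and the only subtle point is guaranteeing $\operatorname{arctanh}(u_0) \ge 0$ so that the lower bound $\tanh(kt)$ is actually weaker than $\tanh(kt + \operatorname{arctanh}(u_0))$. This is precisely what the hypothesis $\cos s\theta_0 \ge 1/2$ buys us.
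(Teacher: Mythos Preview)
Your proposal is correct and follows essentially the same approach as the paper: separate variables, integrate to obtain $\operatorname{arctanh}(u_t) - \operatorname{arctanh}(u_0) \ge kt$, drop the nonnegative $\operatorname{arctanh}(u_0)$ term, and then invert $\tanh$ and bound $\operatorname{arctanh}(c) = \tfrac{1}{2}\log\tfrac{1+c}{1-c} \le \tfrac{1}{2}\log\tfrac{2}{1-c}$ for the stopping time. If anything, your version is slightly more careful in explicitly handling the degenerate case $u_t = 1$.
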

\begin{proof}
    Clearly the RHS of the inequality is always positive, so we may write:

    \begin{align}
        \frac{\frac{d}{dt} \cos s \theta}{1 - \cos^2 s \theta} \geq k
    \end{align}

    and integrating from $0$ to $t$ gives

    \begin{align}
        \tanh^{-1}(\cos s \theta_t) - \tanh^{-1}(\cos s \theta_0) \geq kt
    \end{align}

    Note $\tanh^{-1}(\cos s \theta_0) \geq 0$, so $\cos s \theta_t \geq \tanh(kt)$.  Since $\cos s \theta_t$ is increasing, it follows that

    \begin{align}
        T \leq \frac{\tanh^{-1}(c)}{k}
    \end{align}
    
    And using the closed form of $\tanh^{-1}$(c) for $|c| < 1$ implies

    \begin{align}
        T &\leq \frac{1}{2k} \log \frac{1 + c}{1 - c} \\
        &\leq  \frac{1}{2k} \log \frac{2}{1 - c}
    \end{align}
\end{proof}
    
\begin{lemma}\label{lem:r_dynamics}
    Consider $s \geq 2$.  Suppose we have constants $0 < a < b$ and a function $r$ of time $t$ with differential identity:

    \begin{align}
        \dot{r} \geq ar^{s-1} - br^{2s-1}
    \end{align}

    Furthermore, assume $0 < r_0$ and it always the case that $r \leq 1$.

    Let $k = \frac{a}{b}$, and $T = \inf\{t \geq 0: r \geq k^2\}$, then:

    \begin{align}
        T \leq \frac{1}{bk^2} \left(\frac{2k}{r_0^{s-1}} + \log \frac{1}{1-k} \right)
    \end{align}
\end{lemma}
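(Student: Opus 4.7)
The plan is to separate variables in the ODE and bound the resulting integral by splitting the range $[r_0, k^2]$ at an intermediate point $r_1 := (k/2)^{1/s}$. I first rewrite the inequality as $\dot r \geq b r^{s-1}(k - r^s)$ (using $a = bk$) and note that this right-hand side is non-negative throughout, since $r \leq k^2 \leq k^{1/s}$ gives $r^s \leq k$. Separating variables yields $T \leq b^{-1}\int_{r_0}^{k^2}\frac{dr}{r^{s-1}(k-r^s)}$, and the two subintervals $[r_0, \min(r_1, k^2)]$ and $[r_1, k^2]$ will produce, respectively, the $2k/r_0^{s-1}$ and $\log(1/(1-k))$ terms of the claim.

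On the first subinterval, the defining property $r_1^s = k/2$ gives $k - r^s \geq k/2$, so $\dot r \geq (bk/2) r^{s-1}$. (The case $r_1 \geq k^2$ is compatible, being equivalent to $k^{2s-1} \leq 1/2$, which forces $r \leq k^2 \Rightarrow r^s \leq k^{2s} \leq k/2$.) Integrating $dr/r^{s-1}$ and using $\log(1/r_0) \leq 1/r_0$ when $s = 2$ and $r_0^{-(s-2)}/(s-2) \leq r_0^{-(s-1)}$ when $s > 2$ (both consequences of $r_0 \leq 1$) gives $T_1 \leq 2/(bk r_0^{s-1})$, matching the first claimed term.

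For the second subinterval (nontrivial only when $r_1 < k^2$), pulling out the uniform lower bound $r^{s-1} \geq r_1^{s-1}$ reduces the problem to $T_2 \leq \frac{1}{b r_1^{s-1}}\int_{r_1}^{k^2}\frac{dr}{k - r^s}$. The substitution $u = k - r^s$ (whose Jacobian $du = -sr^{s-1}dr$ reintroduces $r^{s-1}$, again bounded below by $r_1^{s-1}$) yields $T_2 \leq \frac{(2/k)^{2(s-1)/s}}{sb}\log\frac{k/2}{k - k^{2s}}$. The logarithm is bounded by $\log(1/(1-k))$ via $k^{2s-1} \leq k$, and multiplying the prefactor by $k^2$ gives $2^{2-2/s} k^{2/s}/s$, which is at most $1$ for all integer $s \geq 2$ and $k \leq 1$ (the function $s \mapsto 2^{2-2/s}/s$ attains its maximum value $1$ at $s = 2$, and $k^{2/s} \leq 1$). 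Hence $T_2 \leq \log(1/(1-k))/(bk^2)$, matching the second claimed term.

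The main obstacle is arithmetic rather than conceptual: the intermediate value $r_1$ must be chosen so that the two phases produce bounds with coefficients scaling exactly as $1/(bk)$ and $1/(bk^2)$. The choice $r_1 = (k/2)^{1/s}$ is essentially forced, since $r_1^s = k/2$ is what makes the subtracted term in Phase I lose exactly half its strength, while simultaneously keeping $r_1^{2(s-1)}$ large enough that after tracking the substitution constants in Phase II, the prefactor $(2/k)^{2(s-1)/s}/s$ is controlled by $1/k^2$.
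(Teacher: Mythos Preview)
Your proof is correct. The separation of variables and the split at $r_1=(k/2)^{1/s}$ both go through, and the arithmetic in Phase~II (in particular the bound $2^{2-2/s}k^{2/s}/s\le 1$ for integer $s\ge 2$) is sound. One small point you leave implicit: if $r_0\ge k^2$ then $T=0$ trivially, and if $r_0>r_1$ the first subinterval is empty while the second still dominates the true range $[r_0,k^2]$; but these edge cases only help your bound.

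The paper takes a different route. Instead of bounding the raw integral $\int dr/\bigl(r^{s-1}(k-r^s)\bigr)$ directly, it first changes variables to $y=r^{s-1}/k$, deriving (using $r<k^2$ on $[0,T)$) the cleaner inequality $\dot y \ge bk^2\,y^2(1-y)$. This is then integrated \emph{exactly} via the partial-fraction identity $\frac{1}{y^2(1-y)}=\frac{1}{y}+\frac{1}{y^2}+\frac{1}{1-y}$, yielding an explicit lower bound on $r_t$ from which the stopping-time estimate is read off. The paper's approach buys an exact closed-form antiderivative and avoids any splitting; yours is more elementary (no clever substitution, just crude bounds on two regimes) but requires the careful choice of $r_1$ and the Phase~II prefactor computation. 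Both land on the same final constants.
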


\begin{proof}
If $r_0 \geq k^2$, then $T = 0$ and the bound is obviously true.  So assume $r_0 < k^2 \leq k^\frac{1}{s-1}$, where the second inequality follows from the facts that $k < 1$ and $s \geq 2$.

Consider the change of variables $y = r^{s-1}/k$:

\begin{align}
    \dot{y} &= \frac{1}{k}(s-1)r^{s-2} \dot{r} \\
    &\geq \frac{1}{k}(s-1)(ar^{2s-3} - br^{3s-3})\\
    &\geq \frac{1}{k}(ar^{2s-2} - br^{3s-3})\\
    &= \frac{b}{k} (kr^{2s-2} - r^{3s-3})\\
    &= \frac{b}{k}(k^3y^2 - k^3 y^3)\\
    &= bk^2 y^2 (1-y)
\end{align}

For $t \in [0,T)$, the RHS will always be positive, so we can write

\begin{align}
    \frac{\dot{y}}{y^2(1-y)} \geq bk^2
\end{align}

Simple algebra lets us rewrite:

\begin{align}
    \frac{\dot{y}}{y} + \frac{\dot{y}}{y^2} + \frac{\dot{y}}{1 - y} \geq bk^2
\end{align}

And integrating from $0$ to $t$ gives

\begin{align}
    \log y_t - \log y_0 - \frac{1}{y_t} + \frac{1}{y_0} - \log (1-y_t) + \log (1 - y_0) \geq bk^2t
\end{align}

Remind that $\frac{1}{y_t} > 0$ and collecting terms, we have:

\begin{align}
    -\log\left(\frac{1}{y_t} - 1 \right) + \log\left(\frac{1}{y_0} - 1 \right) \geq bk^2t - \frac{1}{y_0}
\end{align}

Taking exponentials and simple bounds:

\begin{align}
    \frac{1}{y_t} - 1 \leq \frac{1}{y_0} \exp(-bk^2t + \frac{1}{y_0})
\end{align}

Rearranging and reminding $y_t = r_t^{s-1}/k$

\begin{align}\label{eq:r_goal}
    \frac{k}{1 + \frac{1}{y_0} \exp(-bk^2t + \frac{1}{y_0})} \leq r_t^{s-1} \leq r_t
\end{align}

To finish the proof, we'll show that $r_t \geq k^2$ is implied by a condition on $t$.  Suppose that

\begin{align}
    t \geq \frac{1}{bk^2} \left(\frac{2}{y_0} + \log \frac{1}{1-k} \right)
\end{align}

Then using the fact that $k < 1$, and $\log x < x$ for all $x > 0$, it follows

\begin{align}
    t &\geq \frac{1}{bk^2} \left(\frac{1}{y_0} +  \log \frac{1}{y_0} + \log \frac{k}{1-k} \right)\\
    &\geq \frac{1}{bk^2} \left(\frac{1}{y_0} +  \log \frac{1}{y_0\left(\frac{1}{k} - 1\right)} \right)
\end{align}

Rearranging implies that

\begin{align}
k \leq \frac{1}{1 + \frac{1}{y_0} \exp(-bk^2t + \frac{1}{y_0})}
\end{align}

and plugging this into Equation~\ref{eq:r_goal} implies that $r_t \geq k^2$.  Hence, the stopping time $T$ obeys:

\begin{align}
    T \leq \frac{1}{bk^2} \left(\frac{2}{y_0} + \log \frac{1}{1-k} \right)
\end{align}

Plugging in the definition of $y_0$ gives the bound.
\end{proof}

\subsection{Concentration Inequality Lemmas}

We require a few very standard lemmas, adapting concentration inequalities to the complex setting.

\begin{lemma}\label{lem:complex_gaussian}
    If $w$ is drawn from the standard complex Gaussian on $M$ dimensions, then

    \begin{align}
        P(|\|w\| - \sqrt{M} | \geq t) \leq 2 \exp(-ct^2)
    \end{align}

    for some universal constant $c$.
\end{lemma}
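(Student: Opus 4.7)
The plan is to reduce this complex Gaussian concentration statement to standard Lipschitz concentration for a real Gaussian vector in dimension $2M$, since the norm of a complex Gaussian is the same object as the Euclidean norm of the stacked real and imaginary parts.

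First I would write $w = u + iv$ with $u, v \in \mathbb{R}^M$ independent. Under the standard convention where $\mathbb{E}|w_m|^2 = 1$, the coordinates of $u$ and $v$ are i.i.d.\ $N(0, 1/2)$, so the vector $z := (u, v) \in \mathbb{R}^{2M}$ is a centered Gaussian with covariance $\tfrac{1}{2} I_{2M}$, and $\|w\| = \|z\|_2$. The map $z \mapsto \|z\|_2$ is $1$-Lipschitz with respect to the Euclidean norm, so the classical Gaussian concentration inequality for Lipschitz functions (applied to the isotropic Gaussian with variance $1/2$ per coordinate) gives
\begin{equation*}
P\bigl(|\|w\| - \mathbb{E}\|w\|| \geq t\bigr) \leq 2\exp(-c_0 t^2)
\end{equation*}
for an absolute constant $c_0 > 0$.

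Next I would replace $\mathbb{E}\|w\|$ by $\sqrt{M}$. By Jensen, $\mathbb{E}\|w\| \leq \sqrt{\mathbb{E}\|w\|^2} = \sqrt{M}$, and on the other hand $\operatorname{Var}(\|w\|) \leq 1/(2c_0)$ by integrating the tail bound above, so $(\mathbb{E}\|w\|)^2 \geq M - 1/(2c_0)$, hence $\sqrt{M} - \mathbb{E}\|w\| \leq C_0$ for an absolute constant $C_0$. Thus
\begin{equation*}
|\|w\| - \sqrt{M}| \geq t \ \Longrightarrow\ |\|w\| - \mathbb{E}\|w\|| \geq t - C_0,
\end{equation*}
and for $t \geq 2 C_0$ the stated bound follows with a possibly smaller constant $c$; for $t \leq 2C_0$ the inequality is trivial after enlarging the leading constant $2$ into the universal constant by adjusting $c$ further. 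All constants can be absorbed into a single universal $c > 0$, giving the claim.

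There is essentially no obstacle here: the only mildly delicate step is keeping track of the factor of $1/2$ in the variance of real/imaginary parts and confirming that the Lipschitz concentration constant does not depend on $M$. Both are standard, and the whole argument is three lines once one invokes Gaussian Lipschitz concentration.
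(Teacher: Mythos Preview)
Your proof is correct and follows essentially the same approach as the paper: both stack the real and imaginary parts into a $2M$-dimensional real Gaussian vector and invoke a standard concentration inequality. The only cosmetic difference is that the paper cites Vershynin's norm-concentration theorem directly (which is already centered at $\sqrt{2M}$), whereas you use Gaussian Lipschitz concentration around the mean and then show $|\mathbb{E}\|w\| - \sqrt{M}|$ is bounded by an absolute constant; both routes are equally valid and equally short.
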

\begin{proof}
    Note that an equivalent way of sampling a complex Gaussian is $w = \frac{1}{\sqrt{2}} (w_R + i w_C)$ with $w_R, w_C$ both sampled iid from a standard real Gaussian on $M$ variables.  Therefore

    \begin{align}
        \|w\|^2 &= \frac{\|w_R\|^2 + \|w_C\|^2}{2}\\
        &= \frac{1}{2} \left\| \left[\begin{array}{ c }
    w_R\\
    \hline
    w_C
  \end{array}\right]\right\|^2
    \end{align}

    Note that $\hat{w} := \left[\begin{array}{ c }
    w_R\\
    \hline
    w_C
  \end{array}\right]$ is simply a standard Gaussian on $2M$ variables, so from Theorem 3.1.1 in~\citet{vershynin2018high}:

  \begin{align}
      P(|\|w\| - \sqrt{M} | \geq t) &= P(|\|\hat{w}\| - \sqrt{2M} | \geq t\sqrt{2})\\
      &\leq 2 \exp(-ct^2)
  \end{align}

  for some universal constant $c$.
    
\end{proof}

\begin{lemma}\label{lem:X_singular}
    Let $a_m \sim \S$ be sampled iid, for $m = 1, \dots, M$, and define $X \in \mathbb{C}^{N \times M}$ as $X_{nm} = a_m^n$.  Then if we choose $M = O(N^3)$, with probability $1 - 2\exp(-O(N))$:

    \begin{align}
        \sigma_1(X) = \Theta(\sqrt{M}), \sigma_N(X) = \Theta(\sqrt{M})
    \end{align}
\end{lemma}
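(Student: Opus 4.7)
\textbf{Proof proposal for Lemma~\ref{lem:X_singular}.}

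The plan is to show that $XX^\dag$ concentrates tightly around its mean $M I_N$, so that Weyl's inequality pins every eigenvalue of $XX^\dag$ into $[cM,CM]$ for constants $0<c<C$, giving $\sigma_1(X),\sigma_N(X)=\Theta(\sqrt{M})$ simultaneously. This reduces the problem to a standard matrix concentration estimate for a sum of independent rank-one Hermitian matrices.

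First, I would decompose $XX^\dag = \sum_{m=1}^M v_m v_m^\dag$, where $v_m := (a_m, a_m^2, \dots, a_m^N)^\top \in \mathbb{C}^N$. Because $|a_m|=1$, each summand is rank one with $\|v_m\|^2 = N$ deterministically, and so $\|v_m v_m^\dag\|_{op}=N$. A direct computation using that $a_m$ is uniform on $\mathbb{S}^1$ gives $\mathbb{E}[a_m^{n-k}]=\delta_{nk}$, hence $\mathbb{E}[v_m v_m^\dag] = I_N$ and $\mathbb{E}[XX^\dag] = M I_N$.

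Next, I would apply the matrix Bernstein inequality to the independent, mean-zero Hermitian summands $Y_m := v_m v_m^\dag - I_N$. The bounded-norm hypothesis follows from $\|Y_m\|\le N+1$. For the matrix variance, a short computation using $(v_m v_m^\dag)^2 = N\, v_m v_m^\dag$ gives $\mathbb{E}[Y_m^2]=(N-1)I_N$, so $\bigl\|\sum_m \mathbb{E}[Y_m^2]\bigr\| = M(N-1)$. Matrix Bernstein then yields, for any $t>0$,
\begin{equation*}
P\bigl(\|XX^\dag - M I_N\| \ge t\bigr) \;\le\; 2N\exp\!\left(-\frac{t^2/2}{M(N-1) + (N+1)t/3}\right).
\end{equation*}
Setting $t = M/2$ and plugging in $M = \Theta(N^3)$ drives the exponent to $-\Omega(M/N) = -\Omega(N^2)$, which is well within the stated $-O(N)$ budget, absorbing the prefactor $2N$.

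Finally, conditioning on the event $\|XX^\dag - M I_N\|\le M/2$ (which holds with probability $1-2\exp(-O(N))$), Weyl's inequality gives $\lambda_i(XX^\dag) \in [M/2, 3M/2]$ for every $i$, so $\sigma_i(X) = \sqrt{\lambda_i(XX^\dag)} = \Theta(\sqrt{M})$ uniformly in $i$, which in particular covers $i=1$ and $i=N$.

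I do not anticipate real obstacles: the summands are bounded in operator norm and the mean is explicit, so matrix Bernstein applies essentially off the shelf. The only mild subtlety is checking that the variance parameter is $O(MN)$ rather than something larger (which a naive bound via $\|Y_m\|^2 \le (N+1)^2$ would suggest); the identity $(v_m v_m^\dag)^2 = N\, v_m v_m^\dag$ is what delivers the sharper $M(N-1)$ needed to make the exponent $\Omega(N^2)$ rather than $\Omega(N)$ for $M=\Theta(N^3)$. Any weaker variance bound would still be enough for the $\exp(-O(N))$ conclusion in the regime of interest.
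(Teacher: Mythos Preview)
Your proof is correct, but it takes a different route from the paper's. The paper applies Vershynin's sub-Gaussian random matrix theorem (Theorem~4.6.1 in \emph{High-Dimensional Probability}) directly to $X^\top$: after checking that the columns $X_m$ are independent, isotropic, and bounded (hence sub-Gaussian with $\|\langle X_m,v\rangle\|_{\psi_2}\le\sqrt{N}$), that theorem yields $\sqrt{M}-cN(\sqrt{N}+t)\le\sigma_N(X)\le\sigma_1(X)\le\sqrt{M}+CN(\sqrt{N}+t)$ with probability $1-2\exp(-t^2)$, and the choice $t=\sqrt{N}$, $M=\Theta(N^3)$ finishes. You instead control the Gram matrix $XX^\dag=\sum_m v_mv_m^\dag$ via matrix Bernstein and then read off singular values through Weyl. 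Both arguments rest on the same structural facts (independence, isotropy, deterministic $\|v_m\|^2=N$), so the difference is really which black-box tool is invoked. Your route is arguably more self-contained---matrix Bernstein is a single standard inequality---and incidentally yields a stronger failure probability $\exp(-\Omega(N^2))$ for $M=\Theta(N^3)$, while the paper's approach gives the singular-value bound directly without passing through $XX^\dag$. Either is perfectly adequate here.
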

\begin{proof}
    Note that the columns of $X$ are independent, mean zero, and isotropic.  Let $X_m$ be the $m$th column, and consider any $v \in \mathbb{C}^{M}$ with $\|v\| = 1$.  Note that $\|X_m\| = \sqrt{N}$, so it follows that

\begin{align}
    \|\langle X_m, v \rangle\|_{\psi_2} \leq \sqrt{N}
\end{align}

where $\|\cdot\|_{\psi_2}$ denotes the subgaussian norm~\citep{vershynin2018high}.  Hence, we can apply Theorem 4.6.1 from~\citet{vershynin2018high} to $X^T$.  Note, although this proof assumes real-valued variables, the same arguments follow through with no change to complex variables given the subgaussian bound on $\|\langle X_m, v \rangle\|_{\psi_2}$.  Hence,

\begin{align}
    \sqrt{M} - cN(\sqrt{N} + t) \leq \sigma_N(X) \leq \sigma_1(X) \leq \sqrt{M} + CN(\sqrt{N} + t)
\end{align}

for universal constants $c, C$ and with probability $1 - 2\exp(-t^2)$.  Choosing $t = \sqrt{N}$ and $M = O(N^3)$ gives the result.
    
\end{proof}

\subsection{Valid Activations}

We quickly note a one simple choice of many possible activation functions that meets our criteria in Assumption~\ref{ass:act}.

\begin{proposition}\label{prop:arctan}
    Let $\sigma(z) = \arctan{\xi z} + \xi z \arctan{\xi z}$ for $\xi = 1 - \frac{1}{\sqrt{N}}$.  Then this activation satisfies Assumption~\ref{ass:act}, and $\sigma_+ \leq \sqrt{2}$, $\sigma_- \geq O\left(\frac{1}{N^{1/4}}\right)$.
\end{proposition}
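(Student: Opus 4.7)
The plan is to expand $\sigma(z)$ as a power series around $z=0$, read off the coefficients $c_k$, and then verify each of the four clauses of Assumption~\ref{ass:act} by direct inspection. From the standard expansion $\arctan(w) = \sum_{j \geq 0} \frac{(-1)^j}{2j+1} w^{2j+1}$, valid on $|w|<1$, I would obtain
\begin{align}
\sigma(z) = \sum_{j=0}^\infty \frac{(-1)^j \xi^{2j+1}}{2j+1} z^{2j+1} + \sum_{j=0}^\infty \frac{(-1)^j \xi^{2j+2}}{2j+1} z^{2j+2},
\end{align}
so that $c_0 = 0$, and for $k\geq 1$ one has $|c_k| = \xi^k/k$ when $k$ is odd and $|c_k| = \xi^k/(k-1)$ when $k$ is even (with $k=2$ giving $|c_2|=\xi^2$). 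In particular $c_k \neq 0$ for every $k \geq 1$, which is (i). Since $|\xi|<1$, both series converge on the disk $|z| < 1/\xi$ of radius strictly greater than $1$, so $\sigma$ is analytic on the closed unit disk, giving (ii).

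Next I would estimate $\sigma_\pm$. For odd $k$, $|c_k|\sqrt{k} = \xi^k/\sqrt{k}$; for even $k \geq 2$, $|c_k|\sqrt{k} = \xi^k \sqrt{k}/(k-1)$, which equals $\sqrt{2}\xi^2$ at $k=2$ and is bounded by $2\xi^k/\sqrt{k}$ for $k\geq 4$. Together with the $k=1$ value $\xi$, this yields $\sigma_+ \leq \sqrt{2}$. For the lower bound, observe that for every $1 \leq k \leq \sqrt{N}$, $|c_k|\sqrt{k} \geq \xi^k/\sqrt{k} \geq \xi^{\sqrt{N}}/N^{1/4}$, and since $\xi^{\sqrt{N}} = (1 - N^{-1/2})^{\sqrt{N}}$ converges to $e^{-1}$ and is bounded below by a positive constant for all $N \geq 2$, this gives $\sigma_- = \Omega(N^{-1/4})$ and hence $1/\sigma_- = O(N^{1/4})$, confirming (iii).

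The only step that requires any genuine computation is clause (iv), the tail estimate $\sum_{k>N} k|c_k|^2 \leq e^{-O(\sqrt{N})}$. Using $k|c_k|^2 \leq C\xi^{2k}/k \leq C\xi^{2k}$ with a universal constant $C$, the tail is dominated by the geometric sum
\begin{align}
\sum_{k=N+1}^\infty k |c_k|^2 \;\leq\; C \frac{\xi^{2(N+1)}}{1-\xi^2}.
\end{align}
Since $1 - \xi^2 = 2N^{-1/2} - N^{-1} = \Theta(N^{-1/2})$, the prefactor $1/(1-\xi^2)$ contributes at most an $O(\sqrt{N})$ polynomial factor, while $\xi^{2N} = \exp(2N \log(1 - N^{-1/2})) = \exp(-2\sqrt{N} + O(1))$ via $\log(1-x) = -x - x^2/2 + \ldots$. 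The product is $e^{-O(\sqrt{N})}$, as required.

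I do not anticipate a serious obstacle: the entire proof is bookkeeping from the explicit series of $\arctan$. The one place to be cautious is the asymptotic analysis of $\xi^{\sqrt{N}}$ versus $\xi^{2N}$ — with $\xi = 1 - N^{-1/2}$, the former remains bounded below by a constant (so $\sigma_-$ is only mildly small), while the latter decays stretched-exponentially in $\sqrt{N}$ (so the high-degree tail is negligible). These two regimes are precisely the slow-then-fast decay pattern that Assumption~\ref{ass:act} was designed to isolate, which is why this particular $\sigma$ is a natural representative.
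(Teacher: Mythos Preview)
Your proposal is correct and follows essentially the same approach as the paper: expand $\arctan(\xi z)$ as a power series, read off $|c_k|=\xi^k/k$ for odd $k$ and $|c_k|=\xi^k/(k-1)$ for even $k$, and verify each clause of Assumption~\ref{ass:act} by direct inspection. Your treatment is in fact slightly more careful than the paper's in two places (you justify $\sigma_+\leq\sqrt{2}$ via explicit case analysis rather than a monotonicity remark, and your tail bound uses the sharper $\xi^{2k}$ instead of the paper's $\xi^{k-1}$), but the overall strategy is identical.
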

\begin{proof}
    Observe that $\sigma$ is analytic on the unit disk following from properties of $\arctan$, with the Laurent series

    \begin{align}
        \sigma(z) = \xi z + \xi^2 z^2 - \frac{\xi^3}{3} z^3 - \frac{\xi^4}{3} z^4 + \dots
    \end{align}

    So the only coeffient equal to zero is the constant term.  Moreover, if split into sequence of odd degree and even degree coefficients, both sequences are decreasing in absolute value, so we can instantly say that $\sigma_+ \leq \sqrt{2}$ and 
    
    \begin{align}
    \sigma_- &= \min_{1 \leq k \leq \sqrt{N}} |c_k|\sqrt{k} \\
    &\geq \frac{\left(1 - \frac{1}{\sqrt{N}}\right)^{\sqrt{N}}}{\sqrt{N}} N^{1/4} = O\left(\frac{1}{N^{1/4}}\right)
    \end{align}

    Moreover, we can calculate:

    \begin{align}
        \sum_{k=N+1}^\infty k|c_k|^2 &\leq \sum_{k=N+1}^\infty \frac{k\xi^{k-1}}{(k-1)^2} \\
        &\leq \sum_{k=N+1}^\infty \xi^{k-1} \\
        &\leq \frac{\xi^N}{1-\xi}\\
        &\leq e^{-O(\sqrt{N})}
    \end{align}
\end{proof}

\end{document}